\newcommand{\R}{\mathbb{R}}
\newcommand{\Sym}{\operatorname{Sym}}
\renewcommand{\O}{\mathcal{O}}
\renewcommand{\S}{\mathcal{S}}
\newcommand{\V}{\mathcal{V}}
\newcommand{\Sf}{\mathcal{S}_{1:d}}
\newcommand{\Uf}{U_{1:d}}
\newcommand{\qf}{q_{1:d}}
\newcommand{\diag}[1]{\operatorname{diag}\left(#1\right)}
\newcommand{\tr}[1]{\operatorname{tr}\left(#1\right)}
\newcommand{\norm}[1]{\left\lVert#1\right\rVert}
\newcommand{\lrp}[1]{\left(#1\right)}
\newcommand{\lrb}[1]{\left[#1\right]}
\newcommand{\twodots}{\mathinner {\ldotp \ldotp}}
\newcommand{\St}{\operatorname{St}}
\newcommand{\Gr}{\operatorname{Gr}}
\newcommand{\Fl}{\operatorname{Fl}}
\newcommand{\T}{^\top}
\DeclareMathOperator*{\argmin}{arg\,min}
\DeclareMathOperator*{\argmax}{arg\,max}
\newcommand{\N}[1]{\mathcal{N}\left(#1\right)}
\newcommand{\bmat}[1]{\begin{bmatrix}#1\end{bmatrix}}
\begin{document}

\title{Nested Subspace Learning with Flags}

\author{\name Tom Szwagier\email tom.szwagier@inria.fr \\
       \addr Université Côte d'Azur and Inria\\
       Sophia Antipolis, France
       \AND
       \name Xavier Pennec \email xavier.pennec@inria.fr \\
       \addr Université Côte d'Azur and Inria\\
       Sophia Antipolis, France}

\editor{}

\maketitle

\begin{abstract}%   <- trailing '%' for backward compatibility of .sty file
Many machine learning methods look for low-dimensional representations of the data. The underlying subspace can be estimated by {first} choosing a dimension $q$ and {then} optimizing a certain objective function over the space of $q$-dimensional subspaces---the Grassmannian. Trying different $q$ yields in general non-nested subspaces, which raises an important issue of consistency between the data representations. In this paper, we propose a simple and easily implementable principle to enforce nestedness in subspace learning methods. It consists in lifting Grassmannian optimization criteria to flag manifolds---the space of nested subspaces of increasing dimension---via nested projectors. We apply the flag trick to several classical machine learning methods and show that it successfully addresses the nestedness issue.
\end{abstract}

\begin{keywords}
  subspace learning, Grassmann manifolds, flag manifolds, nested subspaces, dimensionality reduction
\end{keywords}

\section{Introduction}\label{sec:intro}

Finding low-dimensional representations of datasets is a quite common objective in machine learning, notably in virtue of the \textit{curse of dimensionality}~\citep{bellman_dynamic_1984}.
A classical way of building such a representation is by searching for a low-dimensional subspace that well represents the data, where ``well'' is defined by an application-dependent criterion~\citep{cunningham_linear_2015}. Here is a non-exhaustive list of examples of subspace learning problems.
\textit{Principal component analysis} (PCA)~\citep{pearson_lines_1901,hotelling_analysis_1933,jolliffe_principal_2002} searches for a low-dimensional subspace that minimizes the average squared Euclidean distance to the data.
\textit{Robust subspace recovery} (RSR)~\citep{lerman_overview_2018} minimizes the average (absolute) Euclidean distance to the data, which is less sensitive to outliers.
Many matrix decomposition methods, like \textit{robust PCA}~\citep{candes_robust_2011} and \textit{matrix completion}~\citep{keshavan_matrix_2010,candes_exact_2012} look for low-rank approximations of the data matrix, which can be decomposed into a product of subspace and coordinate matrices.
\textit{Trace ratio} (TR)~\citep{ngo_trace_2012} refers to a wide class of problems that look for subspaces making a tradeoff between desired yet antagonist properties of low-dimensional embeddings. An example is Fisher's \textit{linear discriminant analysis (LDA)}~\citep{fisher_use_1936} which seeks to maximize the between-class variance while minimizing the within-class variance.
\textit{Domain adaptation} methods learn some domain-invariant subspaces~\citep{baktashmotlagh_unsupervised_2013} by minimizing the projected maximum mean discrepancy (MMD)~\citep{gretton_kernel_2012} between the source and target distributions.
\textit{Subspace tracking} methods incrementally minimize a distance between the current subspace and the available data~\citep{balzano_online_2010}.
Subspaces can also be estimated not from the data but from their adjacency matrix or graph Laplacian, as done in the celebrated \textit{Laplacian eigenmaps}~\citep{belkin_laplacian_2003} and \textit{spectral clustering}~\citep{ng_spectral_2001}.
Subspaces can also be estimated beyond Euclidean data, for instance on symmetric positive-definite matrix datasets~\citep{harandi_dimensionality_2018}.
In all the previously cited examples, the search space---the space of all linear subspaces of dimension $q$ embedded in an ambient space of larger dimension $p$---is called the \textit{Grassmannian}, or the \textit{Grassmann manifold}~\citep{bendokat_grassmann_2024} and is denoted $\Gr(p, q)$. 
Consequently, many machine learning methods can be recast as an optimization problem on Grassmann manifolds.
\begin{remark}[Sequential Methods]\label{rem:sequential}
    Some alternative methods to subspace learning via Grassmannian optimization exist. Notably, a large family of methods---like \textit{canonical correlation analysis} (CCA)~\citep{hotelling_relations_1936,hardoon_canonical_2004}, \textit{independent component analysis} (ICA)~\citep{hyvarinen_independent_2000}, \textit{partial least squares} (PLS)~\citep{geladi_partial_1986} or \textit{projection pursuit}~\citep{huber_projection_1985,tsakiris_dual_2018}---builds a low-dimensional subspace out of the data \textit{sequentially}. They first look for the best 1D approximation and then recursively add dimensions one-by-one via deflation schemes or orthogonality constraints~\citep[Section~II.C]{lerman_overview_2018}. Those sequential methods however suffer from many issues---from a subspace estimation point of view---due to their greedy nature. Those limitations are well described in \citet{cunningham_linear_2015} and \citet{lerman_overview_2018}. In light of the fundamental differences between those methods and the previously described subspace methods, we decide not to address sequential methods in this paper.
\end{remark}

\noindent Learning a low-dimensional subspace from the data through Grassmannian optimization requires to choose the dimensionality $q$ \textit{a priori}. 
This prerequisite has many important limitations, not only since a wrong choice of dimension might remove all the theoretical guarantees of the method~\citep{lerman_overview_2018}, but also because the observed dataset might simply {not} have a favored dimension, above which the added information is worthless.\footnote{Many generative models assume that the data lies on a lower dimensional subspace, up to isotropic Gaussian noise~\citep{tipping_probabilistic_1999,lerman_robust_2015}. However, anisotropy is much more realistic~\citep{maunu_well-tempered_2019}, which nuances the theoretical guarantees and removes hopes for a clear frontier between signal and noise at the intrinsic dimension.}
For these reasons, one might be tempted to ``try'' several dimensions---i.e. run the subspace optimization algorithm with different dimensions and choose the best one \textit{a posteriori}, with cross-validation or statistical model selection techniques for instance.
In addition to being potentially costly, such a heuristic raises an important issue that we later refer to as the \textit{nestedness issue}: the optimal subspaces at different dimensions are \textit{not nested} in general.
This notably means that the data embeddings at two different dimensions might drastically differ, which is a pitfall for data analysis.
\begin{remark}[Importance of Nestedness]
    The importance of nestedness has been largely documented in the statistical literature: it yields a hierarchy of representations, improving interpretability, dimension selection, computational efficiency and stability. In geometric statistics, nestedness is considered as a key feature for multilevel analysis \citep{damon_backwards_2014}. Many works consequently directly impose the nestedness of the approximation subspaces by construction (e.g., via forward or backward sequential methods), in contrast to subspace-based methods which cannot guarantee nestedness. One can notably cite the geodesic versions of PCA~\citep{fletcher_principal_2004,sommer_optimization_2014,huckemann_intrinsic_2010}, the \textit{principal nested spheres} of~\citet{jung_analysis_2012}, the \textit{barycentric subspace analysis} of~\citet{pennec_barycentric_2018} and subsequent contributions \citep{eltzner_torus_2018,huckemann_backward_2018,dryden_principal_2019,jaquier_high-dimensional_2020,yang_nested_2021,fan_nested_2022,mankovich_chordal_2023,rabenoro_geometric_2024,mankovich_flag_2025,su_principal_2025}. All these works show the interest of nestedness in statistical inference, with theoretical advantages (e.g., sharper confidence intervals) and meaningful applications. It is also worth noticing that the review paper from~\citet{lerman_overview_2018} emphasizes that the lack of nestedness is a fundamental issue in robust subspace recovery, which is left open for future research.
\end{remark}
We illustrate in Figure~\ref{fig:motivations} the nestedness issue on toy datasets related to three important machine learning problems: robust subspace recovery, linear discriminant analysis and sparse spectral clustering~\citep{lu_convex_2016,wang_grassmannian_2017}.
\begin{figure}
\centering
\includegraphics[width=\linewidth]{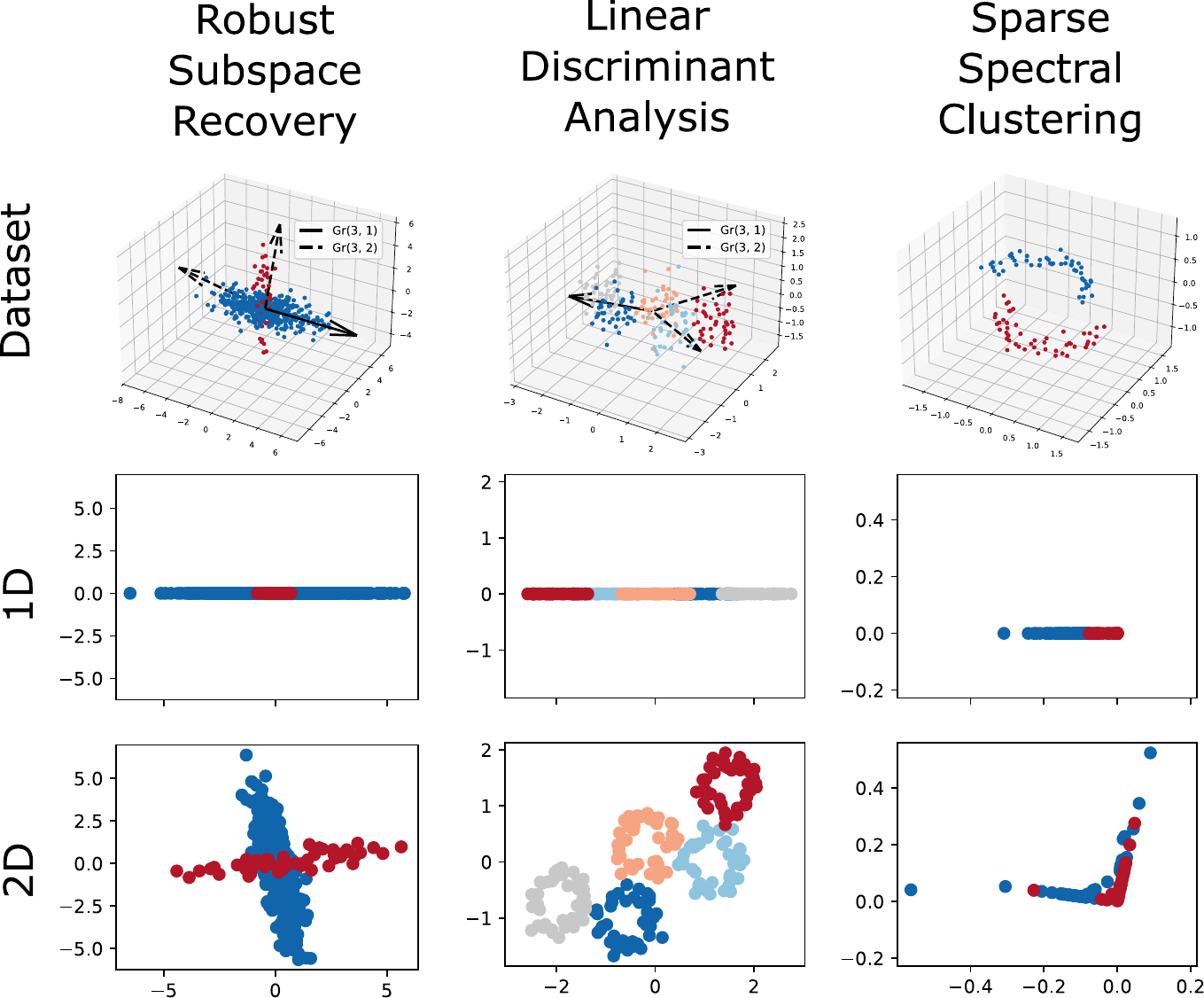}
\caption{Illustration of the subspace nestedness issue in three important machine learning problems: robust subspace recovery (left), linear discriminant analysis (middle) and sparse spectral clustering (right). For each dataset (top), we plot its projection onto the optimal 1D subspace (middle) and 2D subspace (bottom) obtained by solving the associated Grassmannian optimization problem. The 1D and 2D representations are \textit{inconsistent}---in the sense that the 1D plot is not the projection of the 2D plot onto the horizontal axis---which is a pitfall for data analysis.}
\label{fig:motivations}
\end{figure}
We can see that the scatter plots of the projected data for $q=1$ and $q=2$ are \textit{inconsistent}---in that the 1D representation is not the projection of the 2D representation onto the horizontal axis.

In this paper, we propose a natural solution to the nestedness issue with a geometrical object that has not been much considered yet in machine learning: \textit{flags}---i.e. sequences of nested subspaces of increasing dimension.
The idea is to first select a sequence of candidate dimensions---the \textit{signature} of the flag---then formulate a multilevel subspace learning criterion which extends the original one by integrating multiple dimensions simultaneously---with the \textit{flag trick}---and finally optimize over the space of flags of the chosen signature, which has a manifold structure similar to the one of Grassmannians.
The output subspaces are naturally nested, which solves the nestedness issue of subspace learning methods and provides the subspaces with a \textit{hierarchy} that is good for interpretability~\citep[Section~7]{huber_projection_1985}.
Moreover, the nested representations of the data can be fit to general machine learning algorithms and combined via \textit{ensembling} methods. The ensembling weights can then be interpreted as a measure of importance for the different dimensions, echoing the perspectives of the celebrated paper on the automatic choice of dimensionality in PCA~\citep{minka_automatic_2000}.
Beyond the change of paradigm (from subspaces to flags of subspaces), the main contribution of the paper is the \textit{flag trick}: a generic method to convert a \textit{fixed-dimension} subspace criterion into a \textit{multilevel} flag criterion, without nestedness issue.

The paper is organized as follows.
In Section~\ref{sec:flags}, we provide some prerequisites on flag manifolds and describe a steepest descent optimization algorithm that we will use throughout this work.
In Section~\ref{sec:flag_trick}, we introduce the flag trick and propose an ensemble learning algorithm to leverage the hierarchical information of the flag into richer machine learning models.
In Section~\ref{sec:examples}, we show the interest of the flag trick to several subspace learning methods such as robust subspace recovery, trace ratio and spectral clustering.
In Section~\ref{sec:discussion}, we conclude and discuss the limits and perspectives of such a framework.
In Appendix~\ref{app:RSR}, \ref{app:TR}, \ref{app:SSC}, we show how the flag trick can be used to develop two \textit{advanced} optimization algorithms on flag manifolds (namely an iteratively reweighted least squares (IRLS) method for robust subspace recovery and a Newton-Lanczos method for trace ratio problems) that go beyond the limitations of simple algorithms like the steepest descent, and we provide the proofs that are not in the main body. In Appendix~\ref{app:time}, we report the empirical running times for the flag trick in robust subspace recovery, trace ratio and spectral clustering.

\section{Basics of Flag Manifolds}\label{sec:flags}

A flag is a sequence of nested linear subspaces of increasing dimension.
This section introduces flag manifolds and provides the minimal tools to perform optimization on those spaces. More details and properties can be obtained in dedicated papers~\citep{ye_optimization_2022}.

\subsection{Flags in the Scientific Literature}
By providing a natural parametrization for the eigenspaces of symmetric matrices, flags have long been geometrical objects of interest in the scientific community, with traditional applications in physics \citep{arnold_modes_1972} and numerical analysis~\citep{duistermaat_functions_1983,ammar_geometry_1986,watkins_chasing_1991,helmke_isospectral_1991,helmke_optimization_1994}. The seminal work of \citet{edelman_geometry_1998} on the geometry of algorithms with orthogonality constraints paved the way for Riemannian optimization algorithms on Stiefel, Grassmann and related matrix manifolds. The first work that explicitly proposed an optimization algorithm on flag manifolds dates back to 2006. \citet{nishimori_riemannian_2006} reformulated \textit{independent subspace analysis}---an important variant of independent component analysis which maximizes the independence between the norms of projected samples into mutually-orthogonal subspaces~\citep{cardoso_multidimensional_1998, hyvarinen_emergence_2000}---as an optimization problem on flag manifolds, and solved it via a Riemannian gradient descent. 
This led to many papers developing or performing optimization on flag manifolds~\citep{nishimori_flag_2007, nishimori_natural_2008, fiori_extended_2011, fiori_extended_2012, ye_optimization_2022, zhu_practical_2024} and to implementations in open-source packages for differential geometry such as \textit{Manifolds.jl}~\citep{axen_manifoldsjl_2023}.

Recently, flag manifolds played a central role in the investigation of a \textit{curse of isotropy} in principal component analysis. By means of a probabilistic covariance model parameterized with flags, \citet{szwagier_curse_2025} show that \textit{close eigenvalues}---whose relative distance is lower than a certain threshold---should be assumed \textit{equal}. The resulting \textit{principal subspace analysis} has led to several follow-up works, notably an efficient algorithm for covariance estimation in high dimensions~\citep{szwagier_eigengap_2025} and an extension to parsimonious Gaussian mixture models~\citep{szwagier_parsimonious_2025}.

It is important to notice that flags---as mutually-orthogonal subspaces---also implicitly appear in many works, with different keywords. For instance, the mutually-orthogonal class-subspaces of~\citet{watanabe_subspace_1973}, the adaptive-subspace self-organizing maps of~\citet{kohonen_emergence_1996} and follow-up works~\citep{tae-kyun_kim_-line_2010,giguere_manifold_2017} implicitly involve flags under the name of ``mutually-orthogonal subspaces''. Flags also appear in many spectral methods under keywords such as ``isospectral manifolds''~\citep{deift_ordinary_1983,watkins_isospectral_1984,brockett_dynamical_1991,helmke_isospectral_1991,lim_simple_2024}; indeed, at fixed eigenvalues, the remaining degrees of freedom in a symmetric matrix are the associated eigenspaces, which exactly correspond to flags.

Flags as sequences of nested subspaces were first introduced in the machine learning literature as robust prototypes to represent collections of subspaces of different dimensions~\citep{draper_flag_2014,santamaria_order_2016,mankovich_flag_2022}.
They were obtained via a sequential construction (where one dimension is added at a time), which can be problematic for greediness reasons~\citep{huber_projection_1985,lerman_overview_2018}.
\citet{pennec_barycentric_2018} was the first to show that PCA can be reformulated as an optimization problem on flag manifolds by summing the unexplained variance at different dimensions, raising interesting perspectives for multilevel data analysis on manifolds. This principle was recently applied to several variants of PCA under the name of \textit{flagification} in~\citet{mankovich_fun_2024}, showing an important robustness to outliers. Some works compute distance-related quantities on flag-valued datasets, with notable applications in computer vision and hyperspectral imaging~\citep{ma_flag_2021, ma_self-organizing_2022, nguyen_closed-form_2022, mankovich_chordal_2023, szwagier_rethinking_2023}. Other applications include shape spaces~\citep{ciuclea_shape_2023}, quantum physics and chemistry~\citep{vidal_geometric_2024}.
Finally, flags were recently used to learn nested representations from data with a prespecified hierarchy~\citep{mankovich_flag_2025}.

\subsection{Definition and Representation of Flag Manifolds}
Let $p \geq 2$ and $q_{1:d} \coloneqq (q_1, q_2 ,\dots, q_d)$ be a sequence of increasing integers such that $0 < q_1 < q_2 < \cdots < q_d < p$.
A \textit{flag} of \textit{signature} $(p, q_{1:d})$ is a sequence of nested {linear} subspaces $\{0\} \subset \S_1 \subset \S_2 \subset \dots \subset \S_d \subset \R^p$ of respective dimensions $q_1, q_2, \dots, q_d$, noted here $\S_{1:d} \coloneqq (\S_1, \dots, \S_d)$.\footnote{Flags can equivalently be defined as sequences of \textit{mutually orthogonal} subspaces $\V_1 \perp \V_2 \dots \perp \V_d$, of respective dimensions $q_1, q_2-q_1, \dots, q_d - q_{d-1}$, by taking $\V_k$ to be the orthonormal complement of $\S_{k-1}$ onto $\S_k$. This definition is more convenient for computations, but we won't need it in this paper.}
A flag $\S_{1:d}$ can be canonically represented as a sequence of symmetric matrices that are the \textit{orthogonal projection matrices} onto the nested subspaces, i.e. $\S_{1:d} \cong (\Pi_{\S_1}, \dots, \Pi_{\S_d})\in\Sym(p)^d$. We call it the \textit{projection representation} of flags.

The set of flags of signature $(p, q_{1:d})$ is a smooth manifold~\citep{ye_optimization_2022}, denoted here $\Fl(p, q_{1:d})$. 
Flag manifolds generalize {Grassmannians} when $d=1$---since $~{\Fl(p, q) = \Gr(p, q)}$---and therefore share many practical properties that are useful for optimization~\citep{edelman_geometry_1998}. In the following, we will frequently use the following notations: $q_0 \coloneqq 0$, $q \coloneqq q_d$ and $q_{d+1} \coloneqq p$.

For computational and numerical reasons, flags are often represented as orthonormal $q$-frames. Those correspond to points on the Stiefel manifold $\St(p, q) = \{U\in\R^{p\times q}\colon U\T U = I_q\}$).
Let us define sequentially, for $k\in\{1, \dots, d\}$, $U_k\in\St(p, q_k - q_{k-1})$ such that $\bmat{U_1& \cdots & U_k}$ is an orthonormal basis of $\S_k$ (this is possible thanks to the nestedness of the subspaces).
Then, $U_{1:d} \coloneqq \bmat{U_1& \cdots & U_d}\in\St(p, q)$ is a representative of the flag $\S_{1:d}$. We call it the \textit{Stiefel representation} of flags. Such a representation is not unique---contrary to the projection representation defined previously---due to the \textit{rotational-invariance} of orthonormal bases of subspaces. More precisely, if $U_{1:d}$ is a Stiefel representative of the flag $\S_{1:d}$, then for any set of orthogonal matrices $R_k\in\O(q_k - q_{k-1})$, the matrix $U'_{1:d} \coloneqq \bmat{U_1 R_1 & \cdots & U_d R_d}$ spans the same flag of subspaces $\S_{1:d}$.
This provides flag manifolds with a quotient manifold structure~\citep{edelman_geometry_1998, absil_optimization_2008, ye_optimization_2022}: 
\begin{equation}\label{eq:Fl_quotient}
	\Fl(p, q_{1:d}) \cong \St(p, q) \big/ \lrp{\O(q_1) \times \O(q_2 - q_1) \times \dots \times \O(q_d - q_{d-1})}.
\end{equation}

\begin{remark}[Orthogonal Representation]
For computations, one might have to perform the orthogonal completion of some Stiefel representatives.
Let $\Uf \coloneqq \bmat{U_1& \cdots & U_d} \in \St(p, q)$, then one denotes $U_{d+1} \in \St(p, p-q_d)$ to be any orthonormal basis such that $U_{1:d+1} \coloneqq \bmat{U_1 & \cdots & U_d & U_{d+1}} \in \O(p)$. Such an orthogonal matrix $U_{1:d+1}$ will be called an \emph{orthogonal representative} of the flag $\Sf$. In the following, we may abusively switch from one representation to the other since they represent the same flag.
\end{remark}

\subsection{Optimization on Flag Manifolds}
There is a rich literature on optimization on smooth manifolds~\citep{edelman_geometry_1998,absil_optimization_2008,boumal_introduction_2023}, and the particular case of flag manifolds has been notably addressed in~\citet{ye_optimization_2022,zhu_practical_2024}. Since flag manifolds can be represented as quotient spaces of Stiefel manifolds, which themselves are embedded in a Euclidean matrix space, one can develop some optimization algorithms without much difficulty.
In this paper, we will use a \textit{steepest descent} algorithm, which is drawn from several works~\citep{chikuse_statistics_2003,nishimori_riemannian_2006,ye_optimization_2022,zhu_practical_2024}. 
Let $f\colon \Fl(p, \qf) \to \R$ be a smooth function on a flag manifold, expressed in the Stiefel representation (e.g. $f(\Uf) = \sum_{k=1}^d \norm{{U_k} {U_k}\T x} $ for some $x\in\R^p$). Given $\Uf \in \St(p, q)$, let $\operatorname{Grad} f (\Uf) = ({\partial f}/{\partial U_{ij}})_{i, j = 1}^{p, q}$ denote the (Euclidean) gradient of $f$. To ``stay'' on the manifold, one first computes the \textit{Riemannian gradient} of $f$ at $\Uf$, noted $\nabla f (\Uf)$. It can be thought of as a projection of the Euclidean gradient onto the tangent space and computed explicitly~\citep{nishimori_riemannian_2006,ye_optimization_2022}.
Then, one moves in the opposite direction of $\nabla f (\Uf)$ with a so-called \textit{retraction}, which is chosen to be the polar retraction of~\citet[Eq.~(49)]{zhu_practical_2024}, combined with a line-search.
We iterate until convergence.
The final steepest descent algorithm is described in Algorithm~\ref{alg:GD}.
\begin{algorithm}
\caption{Steepest Descent on Flag Manifolds}\label{alg:GD}
\begin{algorithmic}
\Require $f\colon \Fl(p, \qf) \to \R$ a function, $\Uf \in \Fl(p, \qf)$ a flag (Stiefel representation)
\For{$t$ = 1, 2, \dots}
    \State $\bmat{G_1 & \cdots & G_d} \gets \operatorname{Grad} f (\Uf)$ \Comment{Euclidean gradient}
	\State $\nabla \gets \bigl[G_k - \bigl(U_k {U_k}\T G_k + \sum_{l \neq k} U_l {G_l}\T U_k\bigr)\bigr]_{k=1\dots d}$
	\Comment{Riemannian gradient}
	\State $\Uf \gets \operatorname{polar}(U_{1:d} - \alpha \nabla)$ \Comment{polar retraction + line search}
\EndFor
\Ensure $U_{1:d}^* \in \Fl(p, \qf)$ an optimal flag
\end{algorithmic}
\end{algorithm}
\begin{remark}[Initialization]\label{rem:init_gd}
We can initialize Algorithm~\ref{alg:GD} ``randomly'' (under the invariant measure of~\citet{james_normal_1954}): we generate a random $p\times q$ matrix with normal entries and perform its polar decomposition; the orthogonal factor then follows a uniform distribution \citep[Theorem~1.5.5]{chikuse_statistics_2003}. We can also initialize Algorithm~\ref{alg:GD} with a specific flag depending on the objective function, as we will see in Section~\ref{sec:examples}. Such an initial flag can, for instance, be obtained via the eigendecomposition of a symmetric matrix related to the problem, or by refining the solution of the original subspace problem.
\end{remark}
\begin{remark}[Optimization Variants]\label{rk:optim}
Many extensions of Algorithm~\ref{alg:GD} can be considered: conjugate gradient~\citep{ye_optimization_2022}, Riemannian trust region~\citep{absil_optimization_2008}, etc. We can also replace the polar retraction with a geodesic step~\citep{ye_optimization_2022} or other retractions~\citep{zhu_practical_2024}.
\end{remark}
\begin{remark}[Complexity Analysis]\label{rem:complexity}
    The number of operations for the polar retraction is $O(p q^2) + O(q^3)$ (see~\citet{zhu_practical_2024} and comparisons with other retractions therein). The number of operations for the Riemannian gradient (from the Euclidean gradient) is $O(p q^2)$. The number of operations for the Euclidean gradient depends on the objective function $f$ (which itself depends on the dimension $p$, the signature $\qf$ and the number of samples $n$), and the overall complexity of Algorithm~\ref{alg:GD} also depends on the number of iterations (for both the main loop and the line search).
    Therefore, for completeness, we report the empirical running times of our steepest descent algorithm in Appendix~\ref{app:time}.
\end{remark}

 \section{The Flag Trick in Theory}\label{sec:flag_trick}
In this section, we motivate and introduce the flag trick to make subspace learning methods nested.
The key result is Theorem~\ref{thm:flag_trick}, which states that the classical PCA at a fixed dimension can be converted into a nested multilevel method using nested projectors.

\begin{remark}[Centering]
    In the remaining of the paper, we assume that the data has been already \textit{centered} around a point of interest (e.g. its mean or geometric median), so that we are only interested in fitting \textit{linear} subspaces and not \textit{affine} ones. One could directly include the center in the optimization variables---in which case the geometry would be the one of affine Grassmannians~\citep{lim_numerical_2019} or affine flags~\citep{pennec_barycentric_2018}---but we don't do it in this work for conciseness.
\end{remark}

\subsection{From Subspaces to Flags of Subspaces: the Seminal Example of PCA}\label{subsec:nested_pca}
PCA is known as the eigendecomposition of the sample covariance matrix. Originally, it can be formulated as the search for a low dimensional subspace that minimizes the unexplained variance (or maximizes the explained variance).
Let $X\coloneqq\bmat{x_1 & \cdots & x_n} \in \R^{p\times n}$ be a data matrix with $n$ samples, let $\S \in \Gr(p, q)$ be a $q$-dimensional subspace, and let $\Pi_{\S} \in \R^{p\times p}$ be the orthogonal projection matrix onto $\S$.
Then PCA consists in the following optimization problem on Grassmannians:
\begin{equation}\label{eq:PCA_subspace}
\S_q^* = \argmin_{\S \in \Gr(p, q)} \norm{X - \Pi_{\S} X}_F^2,
\end{equation}
where $\norm{M}_F^2 \coloneqq \tr{M\T M}$ denotes the \textit{Frobenius norm}.
The solution to the optimization problem is the $q$-dimensional subspace spanned by the leading eigenvectors of the sample covariance matrix $S \coloneqq \frac 1 n X X\T$, that we note $\S_q^* = \operatorname{Span}(v_1, \dots, v_q)$. 
It is unique when the sample eigenvalues $q$ and $q+1$ are distinct, which is almost sure when $q \leq \operatorname{rank}(S)$. We will assume to be in such a setting in the following for simplicity but it can be easily handled otherwise by ``grouping'' the repeated eigenvalues (cf.~\citet[Theorem~B.1]{szwagier_curse_2025}).
In such a case, the principal subspaces are \textit{nested} for increasing $q$, i.e., if $\S_q^*$ is the $q$-dimensional principal subspace, then for any $r > q$, one has $\S_q^* = \operatorname{Span}(v_1, \dots, v_q) \subset \operatorname{Span}(v_1, \dots, v_r) = \S_r^*$.

Another way of performing PCA is in a sequential manner (cf. Remark~\ref{rem:sequential}). We first estimate the 1D subspace $\mathcal{V}_1^*$ that minimizes the unexplained variance, then estimate the 1D subspace $\mathcal{V}_2^*$ that minimizes the unexplained variance while being orthogonal to the previous one, and so on and so forth. This gives the following \textit{constrained} optimization problem on 1D Grassmannians:
\begin{equation}\label{eq:PCA_sequential}
\mathcal{V}_q^* = \argmin_{\substack{\mathcal{V} \in \Gr(p, 1)\\ \mathcal{V} \perp \mathcal{V}_{q-1} \perp \dots \perp \mathcal{V}_1}} \norm{X - \Pi_{\mathcal{V}} X}_F^2.
\end{equation}
This construction naturally yields a sequence of nested subspaces of increasing dimension---i.e. a flag of subspaces---better and better approximating the dataset:
\begin{equation}\label{eq:PCA_seq_cum}
	\{0\} \subset \S_1^* \subset \S_2^* \subset \dots \subset \S_{p-1}^* \subset \R^p, \text{with } \S_k^* = \bigoplus_{l=1}^k \mathcal{V}_l^*.
\end{equation}
Those subspaces happen to be exactly the same as the ones obtained by solving the subspace learning optimization problem~\eqref{eq:PCA_subspace}, although the way they are obtained (in a greedy manner) is different. This is generally not the case for other dimension reduction problems (for instance in robust subspace recovery, as it is raised in the final open questions of~\citet{lerman_overview_2018}).

Hence, the \textit{subspace} learning formulation of PCA~\eqref{eq:PCA_subspace} is equivalent to the \textit{sequential} formulation of PCA~\eqref{eq:PCA_sequential}---in terms of cumulative spans~\eqref{eq:PCA_seq_cum}---and both yield a flag of subspaces best and best approximating the data. One can wonder if this result could be directly obtained by formulating an optimization problem on flag manifolds. The answer is \textit{yes}, as first proven in~\citet[Theorem~9]{pennec_barycentric_2018} with an \textit{accumulated unexplained variance} (AUV) technique, but there is not a unique way to do it. Motivated by the recent principal subspace analysis~\citep{szwagier_curse_2025}, we propose in the following theorem a generic principle to formulate PCA as an optimization on flag manifolds.
\begin{theorem}[Nested PCA with Flag Manifolds]\label{thm:flag_trick}
	Let $X \coloneqq \bmat{x_1 & \cdots & x_n}\in\R^{p\times n}$ be a centered $p$-dimensional ($p \geq 2$) dataset with $n$ samples. Let $q_{1:d} \coloneqq (q_1, q_2 ,\dots, q_d)$ be a sequence of increasing dimensions such that $0 < q_1 < q_2 < \dots < q_d < p$.
	Let $S \coloneqq \frac 1 n X X\T$ be the sample covariance matrix. Assume that it eigendecomposes as $S \coloneqq \sum_{j=1}^p \ell_j v_j {v_j}\T$ where $\ell_1 \geq \dots \geq \ell_p$ are the eigenvalues and $v_1 \perp \dots \perp v_p$ are the associated eigenvectors.
	Then PCA can be reformulated as the following optimization problem on flag manifolds:
	\begin{equation}
		{\S_{1:d}^*} = \argmin_{\S_{1:d} \in \Fl(p, q_{1:d})} \norm{X - \frac 1 d \sum_{k=1}^d \Pi_{\S_k} X}_F^2.
	\end{equation}
	More precisely, one has ${\S_{1:d}^*} = \lrp{\operatorname{Span}(v_1, \dots, v_{q_1}), \operatorname{Span}(v_1, \dots, v_{q_2}), \dots, \operatorname{Span}(v_1, \dots, v_{q_d})}$.
	The solution is unique if and only if $\ell_{q_k} \neq \ell_{q_{k}+1}, \forall k\in\{1, \dots, d\}$.
\end{theorem}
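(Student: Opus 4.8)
The plan is to rewrite the Frobenius objective as a trace functional of a single symmetric operator, diagonalize it against the orthogonal decomposition underlying the flag, and then reduce the minimization to a weighted sum of Ky-Fan-type variance terms. First I would set $P \coloneqq \frac1d\sum_{k=1}^d \Pi_{\S_k}$ and use $XX\T = nS$ together with the symmetry of $P$ to write $\norm{X - PX}_F^2 = n\tr{S(I-P)^2}$, a routine expansion via the cyclic property of the trace. The key simplification comes from the mutually orthogonal representation of flags: writing $\S_k = \V_1 \oplus \cdots \oplus \V_k$ so that $\Pi_{\S_k} = \sum_{l=1}^k \Pi_{\V_l}$, and letting $\V_{d+1}$ be the orthogonal complement of $\S_d$, I obtain $P = \sum_{l=1}^{d+1} c_l \Pi_{\V_l}$ with $c_l = (d-l+1)/d$ for $l \le d$ and $c_{d+1} = 0$. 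Since the $\Pi_{\V_l}$ are orthogonal projectors onto mutually orthogonal subspaces summing to the identity, $(I-P)^2 = \sum_{l=1}^{d+1}(1-c_l)^2\Pi_{\V_l}$, and the objective becomes $n\sum_{l=1}^{d+1} w_l \tr{S\Pi_{\V_l}}$ with weights $w_l = (l-1)^2/d^2$ for $l\le d$ and $w_{d+1}=1$. A one-line check gives $0 = w_1 < w_2 < \cdots < w_{d+1}$, and this strict monotonicity is the property that drives the whole argument.

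Next I would perform Abel summation. Setting $G_l \coloneqq \tr{S\Pi_{\S_l}}$ with $G_0 = 0$ and $G_{d+1} = \tr{S}$ fixed, the telescoping $\tr{S\Pi_{\V_l}} = G_l - G_{l-1}$ yields $\sum_{l=1}^{d+1} w_l \tr{S\Pi_{\V_l}} = w_{d+1}\tr{S} - \sum_{l=1}^{d}(w_{l+1}-w_l)\,G_l$. Because every increment $w_{l+1}-w_l$ is strictly positive and the leading term is constant, minimizing the objective is equivalent to \emph{simultaneously maximizing} each variance term $G_l = \tr{S\Pi_{\S_l}}$. Here I would invoke the Ky Fan maximum principle: for a fixed dimension $q_l$ one has $\tr{S\Pi_{\S_l}} \le \sum_{i=1}^{q_l}\ell_i$, with equality iff $\S_l$ is a dominant $q_l$-dimensional eigenspace of $S$. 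The decisive observation is that the nested principal flag $\S_l = \operatorname{Span}(v_1,\dots,v_{q_l})$ attains \emph{all} these upper bounds at once---precisely because these subspaces are nested---so it minimizes the objective, giving the claimed minimizer.

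Finally, for uniqueness I would read off the equality case. A flag is a minimizer iff $G_l = \sum_{i=1}^{q_l}\ell_i$ for every $l$ with $w_{l+1}-w_l>0$, i.e.\ for all $l\in\{1,\dots,d\}$. By the equality condition in Ky Fan's principle each $\S_l$ is then forced to be a dominant $q_l$-eigenspace, which is unique as a subspace precisely when $\ell_{q_l} > \ell_{q_l+1}$; if all these boundary gaps are strict, the whole flag is determined. The converse---and what I expect to be the main obstacle---is showing that a single tie $\ell_{q_k} = \ell_{q_k+1}$ already destroys uniqueness. There one must exhibit two distinct minimizing flags by slicing the degenerate eigenspace $E$ straddling position $q_k$ in two different ways while keeping the remaining $\S_l$ as dominant eigenspaces and preserving nestedness. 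This needs a small case analysis: one checks that the part of $\S_k$ lying in $E$ is a subspace of $E$ whose dimension is strictly between the amount forced from below by $\S_{k-1}$ and the amount allowed from above by $\S_{k+1}$, so that with $\dim E \ge 2$ there remains genuine freedom to choose it; rotating within $E$ then produces a second valid nested flag achieving the same optimal value, establishing the ``only if'' direction.
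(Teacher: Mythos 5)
Your proof is correct, and it takes a genuinely more self-contained route than the paper's. The opening reduction is the same in substance: your rewriting of the objective as $n\sum_{l=1}^{d+1} w_l \tr{S\Pi_{\V_l}}$ with strictly increasing weights $w_l=(l-1)^2/d^2$ is exactly the paper's identity $\frac{n}{d^2}\tr{W^2 S}$ with $W = U_{1:d+1}\Lambda U_{1:d+1}\T$, $\Lambda = \diag{0\,I_{q_1},1\,I_{q_2-q_1},\dots,d\,I_{p-q_d}}$, just phrased with mutually orthogonal projectors instead of conjugation by an orthogonal representative. The two proofs then diverge: the paper stops here and delegates both the identification of the minimizer and the uniqueness criterion to an external result (Theorem~B.1 of the principal subspace analysis paper of Szwagier and Pennec, on minimizing $\tr{U\Lambda^2 U\T S}$ over $\O(p)$), whereas you finish the argument from scratch --- Abel summation converts the block-weighted sum into $w_{d+1}\tr{S}-\sum_{l=1}^{d}(w_{l+1}-w_l)\tr{S\Pi_{\S_l}}$ with strictly positive increments, each Ky Fan term is bounded by the corresponding partial eigenvalue sum, and the principal flag attains all the bounds simultaneously precisely because principal subspaces are nested; uniqueness then falls out of the Ky Fan equality cases in both directions. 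What each approach buys: the paper's citation is shorter and ties the theorem to its statistical (maximum-likelihood) framework, while your argument is elementary, makes the hierarchical-reweighting mechanism fully explicit, and proves the ``only if'' part of uniqueness directly rather than by reference. One remark: your converse uniqueness step is only sketched, but the sketch is the right one and does close --- when $\ell_{q_k}=\ell_{q_k+1}$, the intersection $\S_k\cap E$ must have dimension strictly between $\dim(\S_{k-1}\cap E)$ and $\dim(\S_{k+1}\cap E)$ (with the conventions $\S_0=\{0\}$, $\S_{d+1}=\R^p$), so the set of admissible choices is a positive-dimensional Grassmannian and rotating within $E$ yields distinct minimizing flags.
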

\begin{proof} 
One has:
\begin{align}
	\norm{X - \frac 1 d \sum_{k=1}^d \Pi_{\S_k} X}_F^2
	&= \tr{X\T \lrp{I_p - \frac1d\sum_{k=1}^d \Pi_{\S_k}}^2 X},\\
	&= \frac1{d^2} \tr{X\T \lrp{\sum_{k=1}^d \lrp{I_p - \Pi_{\S_k}}}^2 X},\\
	&= \frac n {d^2} \tr{W^2 S},
\end{align}
with $W = \sum_{k=1}^d (I_p - \Pi_{\S_k})$ and $S = \frac 1 n X X\T$.
	Let $U_{1:d+1} \coloneqq \bmat{U_1 & \cdots & U_d & U_{d+1}}\in\O(p)$ be an orthogonal representative (cf. Section~\ref{sec:flags}) of the optimization variable $\S_{1:d} \in \Fl(p, q_{1:d})$. Then one has $~{\Pi_{\S_k} = U_{1:d+1} \diag{I_{q_k}, 0_{p - q_k}} {U_{1:d+1}}\T}$. Therefore, one has $W = U_{1:d+1} \Lambda {U_{1:d+1}}\T$, with $~{\Lambda = \diag{0 \, I_{q_1}, 1 \, I_{q_2 - q_1}, \dots, d \, I_{q_{d+1} - q_d}}}$.
	Hence, one has 
	\begin{equation}
		\argmin_{\S_{1:d} \in \Fl(p, q_{1:d})} \norm{X - \frac 1 d \sum_{k=1}^d \Pi_{\S_k} X}_F^2 \Longleftrightarrow \argmin_{U \in \O(p)} \frac n {d^2} \tr{U \Lambda^2 U\T S}.
	\end{equation}
	The latter problem is exactly the same as in~\citet[Equation~(20)]{szwagier_curse_2025}, which solves maximum likelihood estimation for principal subspace analysis.
	Hence, one can conclude the proof on existence and uniqueness using~\citet[Theorem~B.1]{szwagier_curse_2025}.
\end{proof}
The key element of the proof of Theorem~\ref{thm:flag_trick} is that averaging the nested projectors yields a \textit{hierarchical reweighting} of the (mutually-orthogonal) principal subspaces. More precisely, the $k$-th principal subspace has weight $(k-1)^2$, and this monotonic weighting enables to get a hierarchy of eigenspaces~\citep{absil_optimization_2008,cunningham_linear_2015,pennec_barycentric_2018,oftadeh_eliminating_2020}.
In the following, we note $~{\Pi_{\S_{1:d}} \coloneqq \frac 1 d \sum_{k=1}^d \Pi_{\S_k}}$ and call this symmetric matrix the \textit{average multilevel projector}, which will be central in the extension of subspace methods into multilevel subspace methods.

\begin{remark}[Weighted Projectors]
    One can show that any convex combination of the nested projectors yields the same result. Indeed, let $\alpha_1, \dots, \alpha_{d}$ be positive scalars. Let $\Sf \in \Fl(p, \qf)$ and $U_{1:d+1} \coloneqq \bmat{U_1 & U_2 & \cdots & U_{d+1}} \in \O(p)$ be an orthogonal representative of $\Sf$ (cf. Section~\ref{sec:flags}). Then, one has $\sum_{k=1}^{d} \alpha_k \Pi_{\S_k} = \sum_{k=1}^{d} \alpha_k \sum_{l=1}^{k} U_l{U_l}\T = \sum_{l=1}^{d} \lrp{\sum_{k=l}^{d} \alpha_k} U_l{U_l}\T$. Therefore, the weights in front of each (mutually-orthogonal) subspace $\operatorname{Span}(U_l)$ are strictly decreasing, and the solutions of Theorem~\ref{thm:flag_trick} are exactly the same. For criteria beyond PCA, the choice of weights might have an influence on the solution. In the following, we will by default consider uniform weights $\lrp{\frac{1}{d}, \dots, \frac{1}{d}}$.
\end{remark}

\subsection{The Flag Trick}
As we will see in the following (Section~\ref{sec:examples}), many important machine learning problems can be formulated as the optimization of a certain function $f$ on Grassmannians.
Theorem~\ref{thm:flag_trick} shows that replacing the subspace projection matrix $\Pi_{\S}$ appearing in the objective function by the average multilevel projector $\Pi_{\Sf}$ yields a sequence of subspaces that meet the original objective of principal component analysis, while being nested.
This leads us to introduce the \textit{flag trick} for general subspace learning problems.
\begin{definition}[Flag Trick]\label{def:flag_trick}
Let $p \geq 2$, $0 < q < p$ and $q_{1:d} \coloneqq (q_1, q_2 ,\dots, q_d)$ be a sequence of increasing dimensions such that $0 < q_1 < q_2 < \dots < q_d < p$.
The flag trick consists in replacing a subspace learning problem of the form:
\begin{equation}\label{eq:subspace_problem}
    \argmin_{\S \in \Gr(p, q)} f(\Pi_\S)
\end{equation}
with the following optimization problem:
\begin{equation}\label{eq:flag_problem}
    \argmin_{\S_{1:d} \in \Fl(p, q_{1:d})} \, f\lrp{\Pi_{\S_{1:d}}},
\end{equation}
where $~{\Pi_{\S_{1:d}} \coloneqq \frac 1 {d} \sum_{k=1}^{d} \Pi_{\S_k}}$ is the average multilevel projector.
\end{definition}
Let us note that for $d=1$---i.e., with signatures of the form $q_{1:d} = (q)$---the flag-tricked problem~\eqref{eq:flag_problem} is equivalent to the original subspace problem~\eqref{eq:subspace_problem}. For $d>1$, the flag trick yields new objective criteria that we cannot a priori relate to the original subspace learning problem. Except for the very particular case of PCA where $f_X(\Pi) = \norm{X - \Pi X}_F^2$ (Theorem~\ref{thm:flag_trick}), we cannot expect to have an analytic solution to the flag problem~\eqref{eq:flag_problem}; indeed, in general, subspace problems do not even have a closed-form solution as we shall see in Section~\ref{sec:examples}. This justifies the introduction of optimization algorithms on flag manifolds like Algorithm~\ref{alg:GD}. 

\begin{remark}[Flag Trick vs. AUV]
The original idea of accumulated unexplained variance~\citep{pennec_barycentric_2018} (and its subsequent application to several variants of PCA under the name of ``flagification''~\citep{mankovich_fun_2024}) consists in summing the subspace criteria at different dimensions, while the flag trick directly averages the orthogonal projection matrices that appear inside the objective function.
While both ideas are equally worth experimenting with, we believe that the flag trick has a much wider practical reach. Indeed, from a technical viewpoint, the flag trick appears at the covariance level and directly yields a hierarchical reweighting of the principal subspaces (see end of Section~\ref{subsec:nested_pca}). This reweighting is only indirect with the AUV---due to the linearity of the trace operator---and is not expected to occur beyond PCA (flagification). Notably, as we shall see in Section~\ref{sec:examples}, the flag trick enables to easily develop extensions of well-known methods involving PCA, like IRLS~\citep{lerman_fast_2018} or Newton-Lanczos methods for trace ratio optimization~\citep{ngo_trace_2012}, it less likely reaches singularities of the objective function (see, notably, end of~Section~\ref{subsubsec:FT_RSR}) and it is closer in spirit to the statistical formulations of PCA~\citep{szwagier_curse_2025}.
\end{remark}

\subsection{Multilevel Machine Learning}
Subspace learning is often used as a preprocessing task before running a machine learning algorithm, notably to overcome the curse of dimensionality. One usually projects the data onto the optimal subspace $\S^* \in \Gr(p, q)$ and use the resulting lower-dimensional dataset as an input to a machine learning task like clustering, classification or regression~\citep{bouveyron_model-based_2019}. Since the flag trick problem~\eqref{eq:flag_problem} does not output one subspace but a hierarchical sequence of nested subspaces, it is legitimate to wonder what to do with such a multilevel representation.
In this subsection, we propose a general \textit{ensemble learning} method to aggregate the hierarchical information coming from the flag of subspaces.

Let us consider a dataset $X \coloneqq \bmat{x_1 & \cdots & x_n}\in\R^{p \times n}$ (possibly with some associated labels $Y \coloneqq \bmat{y_1& \cdots & y_n}\in\R^{m \times n}$). In machine learning, one often fits a model to the dataset by optimizing an objective function of the form $R_{X, Y}(g) = \frac{1}{n} \sum_{i=1}^n L(g(x_i), y_i)$.
With the flag trick, we get a filtration of projected data points $Z_k = \Pi_{\S_k^*} X, k\in\{1, \dots, d\}$ that can be given as an input to different machine learning algorithms. This yields optimal predictors $g_k^* = \operatorname{argmin}_g \, R_{Z_k, Y}(g)$ which can be aggregated via \href{https://scikit-learn.org/stable/modules/ensemble.html}{ensembling methods}.
For instance, \textit{voting} methods choose the model with the highest performance on holdout data; this corresponds to selecting the optimal dimension $q^* \in q_{1:d}$ \textit{a posteriori}, based on the machine learning objective. A more nuanced idea is the one of \textit{soft voting}, which makes a weighted averaging of the predictions. The weights can be uniform, proportional to the performances of the individual models, or learned to maximize the performance of the weighted prediction~\citep{perrone_when_1992}. Soft voting gives different weights to the nested subspaces depending on their contribution to the ensembled prediction and therefore provides a soft measure of the relative importance of the different dimensions. In that sense, it goes beyond the classical manifold assumption stating that data has one intrinsic dimension, and instead proposes a soft blend between dimensions that is adapted to the learning objective. This sheds light on the celebrated paper of Minka for the automatic choice of dimensionality in PCA~\citep[Section~5]{minka_automatic_2000}.
Many other ensembling methods are possible like gradient boosting, Bayesian model averaging and stacking.
The whole methodology is summarized in Algorithm~\ref{alg:flag_trick} and illustrated in Figure~\ref{fig:illustration}.
\begin{algorithm}
\caption{Flag Trick combined with Ensemble Learning}\label{alg:flag_trick}
\begin{algorithmic}
\Require $X \coloneqq \bmat{x_1 & \cdots & x_n}\in\R^{p \times n}$ a data matrix; $q_{1:d} \coloneqq (q_1, \dots, q_d)$ a flag signature; $f$ a subspace learning objective; (opt.) $Y \coloneqq \bmat{y_1 & \cdots & y_n}\in\R^{m \times n}$ a label matrix
\State ${\S}_{1:d}^* \gets \operatorname{argmin}_{\S_{1:d} \in \Fl(p, q_{1:d})} \, f(\Pi_{\S_{1:d}})$  \Comment{flag trick \eqref{eq:flag_problem} + optimization (Alg.~\ref{alg:GD})}
\For{k = 1 \dots d}
	\State $g_k^* \gets \operatorname{fit}(\Pi_{\S_k^*} X, Y)$  \Comment{learning on $q_k$-dimensional projected data}
	\State $Y_k^* \gets g_k^*(\Pi_{\S_k^*} X)$ \Comment{prediction on $q_k$-dimensional projected data}
\EndFor
\State $Y^* \gets \operatorname{ensembling}({Y}_1^*, \dots, {Y}_d^*)$  \Comment{weighted predictions}
\Ensure $Y^*$ the ensembled predictions
\end{algorithmic}
\end{algorithm}
\begin{figure}
	\centering
    \includegraphics[width=\linewidth]{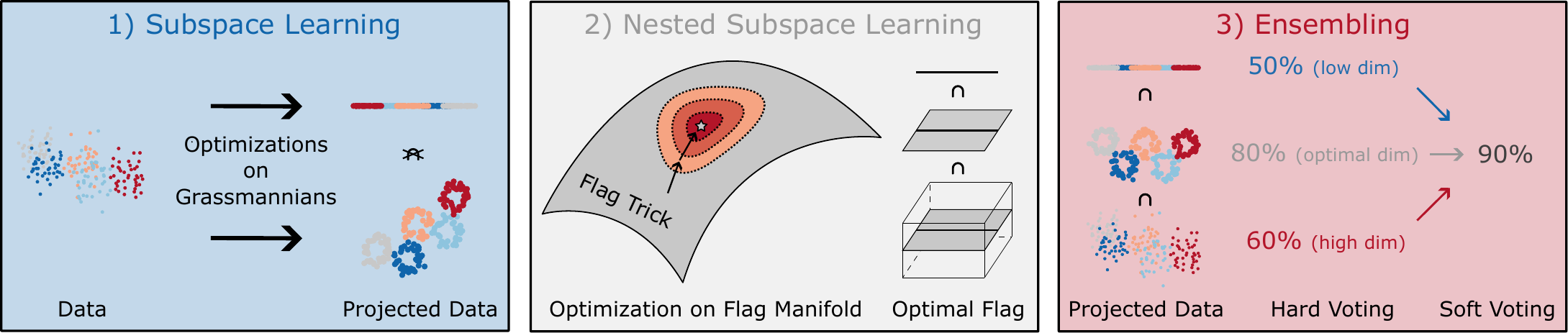}
    \caption{Illustration of the flag trick methodology. 1) We start with a subspace learning problem: $\argmin_{\S \in \Gr(p, q)} f(\Pi_\S)$. Trying different $q$'s yields in general non-nested subspaces, which raises an issue of consistency between data representations. 2) We convert the subspace learning problem into a nested subspace learning problem via the flag trick: $\argmin_{\S_{1:d} \in \Fl(p, q_{1:d})} f(\frac 1 d \sum_{k=1}^d \Pi_{\S_k})$. We run a steepest descent on flag manifolds (Algorithm~\ref{alg:GD}) and get a flag of nested subspaces. 3) We fit a machine learning algorithm (regression, classification, etc.) to the projected data at each dimension $q_k \in q_{1:d}$. We aggregate the estimators via ensembling methods (hard voting, soft voting, etc.) and get improved predictions.}
	\label{fig:illustration}
\end{figure}

\noindent Algorithm~\ref{alg:flag_trick} is a general proposition of multilevel machine learning with flags, but many other uses of the optimal flag $\S_{1:d}^*$ are possible, depending on the application. For instance, one may directly use the reweighted data matrix $\Pi_{\S_{1:d}^*} X$ as an input to the machine learning algorithm. This enables to fit only one model instead of $d$.
One can also simply analyze the projected data \textit{qualitatively} via scatter plots or reconstruction plots as evoked in Section~\ref{sec:intro}. The nestedness will automatically bring consistency contrarily to non-nested subspace methods, and therefore improve interpretability.
Finally, many other ideas can be borrowed from the literature on subspace clustering and flag manifolds~\citep{draper_flag_2014,launay_mechanical_2021, ma_flag_2021, mankovich_flag_2022,mankovich_chordal_2023,mankovich_fun_2024,mankovich_flag_2025}, for instance the computation of distances between flags coming from different datasets as a multilevel measure of similarity between datasets.

\begin{remark}[Choice of Signature]\label{rk:choice_signature}
    In the following experiments, the flag signatures are chosen heuristically. Although important, we believe that such a choice must be thoroughly addressed for each subspace learning problem, which leads us to leave that for future research. Alternative methodologies for the automatic choice of dimensions are discussed in Section~\ref{sec:discussion}. One could notably investigate penalties on the spectral gaps in the same spirit as~\citet{szwagier_eigengap_2025}---where the penalty factor would be tuned according to the downstream machine learning task. Beyond spectral gaps, the signature could be selected by starting from a flag of signature $(p, (1, 2, \dots, p-1))$ and removing the nested subspaces that do not significantly increase the accuracy on the downstream machine learning task. For the specific example of subspace averaging, a hyperparameter-free signature selection method extending the order fitting rule of~\citet{santamaria_order_2016} could be investigated via the flag trick.
\end{remark}

\section{The Flag Trick in Action}\label{sec:examples}
In this section, we provide some applications of the flag trick to several learning problems. We choose to focus on subspace recovery, trace ratio and spectral clustering problems. Other ones, like domain adaptation, matrix completion and subspace tracking are developed or mentioned in the last subsection but not experimented for conciseness.

\subsection{Outline and Experimental Setting}
For each application, we first present the learning problem as an optimization on Grassmannians. Second, we formulate the associated flag learning problem by applying the flag trick (Definition~\ref{def:flag_trick}). Third, we optimize the problem on flag manifolds with the steepest descent method (Algorithm~\ref{alg:GD})---more advanced algorithms are also derived in the appendix. 
Finally, we perform various nestedness and ensemble learning experiments via Algorithm~\ref{alg:flag_trick} on both synthetic and real datasets.

The general methodology to compare Grassmann-based methods to flag-based methods is the following one. For each experiment, we first choose a flag signature $~{q_{1:d} \coloneqq (q_1, \dots, q_d)}$, then we run independent optimization algorithms on $\Gr(p, q_1), \dots, \Gr(p, q_d)$~\eqref{eq:subspace_problem} and finally we compare the optimal subspaces $\S_k^* \in \Gr(p, q_k)$ to the optimal flag of subspaces $\Sf^* \in \Fl(p, q_{1:d})$ obtained via the flag trick~\eqref{eq:flag_problem}. 
To show the nestedness issue in Grassmann-based methods, we compute the subspace distances $\Theta(\S_k^*, \S_{k+1}^*)_{k=1\dots d-1}$, where $\Theta$ is the generalized Grassmann distance of~\citet[Eq.~(14)]{ye_schubert_2016}. It consists in the $\ell^2$-norm of the principal angles, which can be obtained from the singular value decomposition (SVD) of the inner-product matrices ${U_k}\T {U_{k+1}}$---$U_k \in \St(p, q_k)$ being an orthonormal basis of $\S_k^*$.

Regarding the implementation of the steepest descent algorithm on flag manifolds (Algorithm~\ref{alg:GD}), we develop a new class of manifolds in \href{https://pymanopt.org/}{PyManOpt}~\citep{boumal_manopt_2014,townsend_pymanopt_2016}, and run their \href{https://github.com/pymanopt/pymanopt/blob/master/src/pymanopt/optimizers/steepest_descent.py}{SteepestDescent} algorithm. Our implementation of the \texttt{Flag} class is based on the Stiefel representation of flag manifolds, detailed in Section~\ref{sec:flags}, with the retraction being the polar retraction. For the computation of the gradient, we use automatic differentiation with the \texttt{\href{https://github.com/HIPS/autograd}{autograd}} package. We could derive the gradients by hand from the expressions we get, but we use automatic differentiation as strongly suggested in PyManOpt's \href{https://pymanopt.org/docs/stable/quickstart.html}{documentation}.
Finally, the real datasets and the machine learning methods used in the experiments can be found in \href{https://scikit-learn.org/stable/}{scikit-learn}~\citep{pedregosa_scikit-learn_2011}.

\subsection{The Flag Trick for Robust Subspace Recovery Problems}
As shown in introduction and particularly in Figure~\ref{fig:motivations}, robust subspace recovery (RSR) methods based on Grassmannian optimization might suffer from the nestedness issue, which notably yields inconsistent data representations across different dimensions (as seen on the left part of the figure, where outliers somewhat swap position with inliers from 1D to 2D). The RSR overview by~\citet{lerman_overview_2018} particularly emphasizes the need for nested subspace methods, although it leaves this as an open perspective. In this subsection, we address the nestedness issue in robust subspace recovery by applying the flag trick (Definition~\ref{def:flag_trick}) to the problem of absolute deviation minimization~\citep{maunu_well-tempered_2019}.

Robust subspace recovery is an outlier-robust extension of classical dimension reduction methods such as PCA. Let us consider a dataset that is a union of \textit{inliers} and \textit{outliers}---the inliers are assumed to lie near a low-dimensional subspace $\S$ while the outliers live in the ambient space. The aim of RSR is to recover $\S$. Without further specifications, the RSR problem might not be well-posed. Therefore, the works in this domain often have to make some assumptions on the inlier and outlier distributions in order to obtain convergence and recovery guarantees. For instance, in \citet{lerman_overview_2018}, it is assumed that the inliers ``fill'' the lower-dimensional subspace and that the outliers are not much ``aligned''; this is rigorously defined in~\citet{lerman_robust_2015} and~\citet{maunu_well-tempered_2019} through \textit{permeance} and \textit{alignment} statistics. 

A generative model---the \textit{haystack model}---following these assumptions is introduced in~\citet{lerman_robust_2015}. It assumes an isotropic Gaussian distribution on the subspace for the inliers and an isotropic Gaussian distribution on the (full) ambient space for the outliers. A more realistic model---the \textit{generalized haystack model}---is introduced in~\citet{maunu_well-tempered_2019} to circumvent the simplistic nature of the haystack model. This one assumes general (anisotropic) Gaussian distributions for the inliers and outliers.\footnote{The generalized haystack model is more-precisely defined as follows: the $n_\mathrm{in}$ inliers are i.i.d sampled from a sub-Gaussian distribution $\N{0, \Sigma_\mathrm{in}/q}$, with $\Sigma_\mathrm{in} = U \Lambda U\T$, $U\in\St(p, q)$ and $\Lambda \in\diag{\R^q}$, while the $n_\mathrm{out}$ outliers are i.i.d sampled from a sub-Gaussian distribution $\N{0, \Sigma_\mathrm{out} / \operatorname{rk}(\Sigma_\mathrm{out})}$, with $\Sigma_\mathrm{out} \in \Sym(p)$. The (original) haystack model is a particular case of the generalized haystack model with isotropic outliers and (subspace-)isotropic inliers, i.e., $\Sigma_\mathrm{out} = \sigma^2_\mathrm{out} \, I_p / p$ and $\Sigma_\mathrm{in} = \sigma^2_\mathrm{in} \, U U\T / q$.} This makes the learning harder, since the anisotropy may keep the inliers from properly permeating the low-dimensional subspace---as discussed in Section~\ref{sec:intro}. Therefore, one has to make some stronger assumptions on the inlier-outlier ratio and the covariance eigenvalues distributions to derive some convergence and recovery guarantees.
\begin{remark}[Parametrization of RSR Generative Models]
The inlier distribution in the haystack model follows the \textit{isotropic PPCA} model~\citep{bouveyron_hddc_2007, bouveyron_intrinsic_2011}, while it follows the \textit{PPCA} model~\citep{tipping_probabilistic_1999} in the case of the generalized haystack model. Both models are a special case of the principal subspace analysis models~\citep{szwagier_curse_2025}. However, as argued in~\citet{szwagier_curse_2025}, while the haystack model is parameterized with Grassmannians, the generalized haystack model---which has more degrees of freedom accounting for the anisotropy---is parameterized with Stiefel manifolds. Therefore, from a statistical modeling perspective, it only makes sense to conduct subspace learning experiments on the haystack model and not the generalized one.
\end{remark}

\subsubsection{Application of the Flag Trick to RSR}\label{subsubsec:FT_RSR}
Among the large family of methods for robust subspace recovery presented in~\citet{lerman_overview_2018}, we consider the one of \textit{least absolute deviation} (LAD):
\begin{equation}\label{eq:RSR_Gr}
    \argmin_{\S \in \Gr(p, q)} \sum_{i=1}^n \norm{x_i - \Pi_{\S} x_i}_2.
\end{equation}
It is motivated by the sensitivity of squared norms to outliers, which makes PCA~\eqref{eq:PCA_subspace} unsuitable for outlier-contaminated data.
Problem~\eqref{eq:RSR_Gr} has the advantage of being rotationally invariant~\citep{ding_r1-pca_2006} but the drawback of being NP-hard~\citep{mccoy_two_2011, lerman_overview_2018} and obviously non-convex---since Grassmannians are not. A first body of works relaxes the problem, for instance by optimizing on the convex hull of Grassmannians~\citep{mccoy_two_2011, xu_robust_2012, zhang_novel_2014, lerman_robust_2015}.
A second body of works directly optimizes the LAD criterion on Grassmannians, either with an IRLS algorithm~\citep{lerman_fast_2018} or with a geodesic gradient descent~\citep{maunu_well-tempered_2019}, both achieving very good results in terms of recovery and speed.
The following proposition applies the flag trick to the LAD problem~\eqref{eq:RSR_Gr}.
\begin{proposition}[Flag Trick for RSR]\label{prop:RSR}
The flag trick transforms the least absolute deviation problem~\eqref{eq:RSR_Gr} into:
\begin{equation}\label{eq:RSR_Fl}
    \argmin_{\S_{1:d} \in \Fl(p, q_{1:d})} \sum_{i=1}^n \norm{x_i - \Pi_{\S_{1:d}} x_i}_2,
\end{equation} 
which is equivalent to the following optimization problem:
\begin{equation}\label{eq:RSR_Fl_St}
	\argmin_{U_{1:d} \in \St(p, q)} \sum_{i=1}^n \sqrt{\norm{x_i}_2^2 - \norm{{U_{1:d}}\T x_i}_2^2 + \sum_{k=1}^{d} \lrp{\frac {k-1} {d}}^2 \norm{{U_k}\T x_i}_2^2}.
\end{equation}
\end{proposition}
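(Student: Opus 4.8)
The first claimed equivalence, that the flag trick turns \eqref{eq:RSR_Gr} into \eqref{eq:RSR_Fl}, is immediate: it is nothing but Definition~\ref{def:flag_trick} applied to the least-absolute-deviation objective $f(\Pi) = \sum_{i=1}^n \norm{x_i - \Pi x_i}_2$. So the real content of the proposition is the second equivalence, \eqref{eq:RSR_Fl} $\Leftrightarrow$ \eqref{eq:RSR_Fl_St}, which I would prove by showing that the two objectives agree summand-by-summand in $i$. Since each summand is a nonnegative square root, it suffices to establish the per-sample squared identity $\norm{x_i - \Pi_{\S_{1:d}} x_i}_2^2 = \norm{x_i}_2^2 - \norm{U_{1:d}\T x_i}_2^2 + \sum_{k=1}^d \lrp{\frac{k-1}{d}}^2 \norm{U_k\T x_i}_2^2$ for every $i$.

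To establish this identity, the plan is to reuse the spectral computation already carried out in the proof of Theorem~\ref{thm:flag_trick}. Writing $\norm{x_i - \Pi_{\S_{1:d}} x_i}_2^2 = x_i\T (I_p - \Pi_{\S_{1:d}})^2 x_i$ and using $I_p - \Pi_{\S_{1:d}} = \frac 1 d \sum_{k=1}^d (I_p - \Pi_{\S_k}) = \frac 1 d W$, an orthogonal representative $U_{1:d+1} \in \O(p)$ diagonalizes $W$ as $W = U_{1:d+1}\Lambda {U_{1:d+1}}\T$ with $\Lambda = \diag{0\,I_{q_1}, 1\,I_{q_2-q_1}, \dots, d\,I_{q_{d+1}-q_d}}$, exactly as in that proof. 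Squaring gives $(I_p - \Pi_{\S_{1:d}})^2 = \frac 1 {d^2} U_{1:d+1}\Lambda^2 {U_{1:d+1}}\T$, and expanding $x_i\T(\cdot)x_i$ in the blocks $U_k\T x_i$ of ${U_{1:d+1}}\T x_i$ yields $\sum_{k=1}^d \lrp{\frac{k-1}{d}}^2 \norm{U_k\T x_i}_2^2 + \norm{U_{d+1}\T x_i}_2^2$, since the $k$-th block of $\frac 1{d^2}\Lambda^2$ carries weight $\lrp{\frac{k-1}{d}}^2$ for $k \leq d$ and weight $1$ for the trailing block.

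The one point requiring care, and the step I expect to be the main (if modest) obstacle, is eliminating the trailing term $\norm{U_{d+1}\T x_i}_2^2$, which involves the auxiliary completion $U_{d+1}$ that must not appear in the final objective. Here I would invoke orthogonality of $U_{1:d+1}$: since $\norm{{U_{1:d+1}}\T x_i}_2^2 = \norm{x_i}_2^2$ and $\norm{{U_{1:d}}\T x_i}_2^2 = \sum_{k=1}^d \norm{U_k\T x_i}_2^2$, one obtains $\norm{U_{d+1}\T x_i}_2^2 = \norm{x_i}_2^2 - \norm{{U_{1:d}}\T x_i}_2^2$. Substituting this back gives the desired per-sample identity; taking square roots and summing over $i$ proves \eqref{eq:RSR_Fl} $\Leftrightarrow$ \eqref{eq:RSR_Fl_St}. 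Finally, I would remark that the right-hand objective is invariant under the rotational action $U_k \mapsto U_k R_k$ with $R_k \in \O(q_k - q_{k-1})$, so it descends to a well-defined function on the quotient \eqref{eq:Fl_quotient} and the two $\argmin$ problems genuinely coincide.
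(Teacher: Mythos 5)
Your proof is correct and follows essentially the same route as the paper's: pass to an orthogonal representative $U_{1:d+1}\in\O(p)$, diagonalize $\sum_{k=1}^d (I_p - \Pi_{\S_k})$ with block weights $(k-1)$ (you cite the computation in Theorem~\ref{thm:flag_trick}'s proof, the paper re-derives it by a double summation inversion---same identity), square, and eliminate the completion block via $\norm{U_{d+1}\T x_i}_2^2 = \norm{x_i}_2^2 - \norm{{U_{1:d}}\T x_i}_2^2$. Your closing remark on rotational invariance, ensuring the Stiefel objective descends to the flag quotient, is a small addition the paper leaves implicit.
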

\begin{proof}
The proof is given in Appendix~\ref{app:RSR}.
\end{proof}
\begin{remark}[Generalization of RSR]
We can check that Equation~\eqref{eq:RSR_Fl_St} generalizes the original RSR problem \eqref{eq:RSR_Gr} from~\citet{maunu_well-tempered_2019}. Indeed, if we consider the particular case $d=1$, $U \coloneqq U_{1:d}$, $\S \coloneqq \operatorname{Span}(U_1)$ (Grassmannian), then we get:
\begin{multline}
    \sum_{i=1}^n \sqrt{\norm{x_i}_2^2 - \norm{{U_{1:d}}\T x_i}_2^2 + \sum_{k=1}^{d} \lrp{\frac {k-1} {d}}^2 \norm{{U_k}\T x_i}_2^2} = \sum_{i=1}^n \sqrt{\norm{x_i}_2^2 - \norm{{U_{1:d}}\T x_i}_2^2 + 0} =\\ \sum_{i=1}^n \norm{(I_p - U U\T) x_i}_2 = \sum_{i=1}^n \norm{x_i - \Pi_\S x_i}_2. 
\end{multline}
\end{remark}
Hence, \eqref{eq:RSR_Fl_St} is a multilevel generalization of the RSR problem from~\citet{maunu_well-tempered_2019}. The new term under the square root, $\sum_{k=1}^{d} \lrp{\frac {k-1} {d}}^2 \norm{{U_k}\T x_i}_2^2$, is nonzero when $d>1$ (i.e., for flag manifolds refining Grassmannians). This additional term corresponds to a weighted sum over the blocks of the orthonormal frame $U_{1:d}\in\St(p, q)$, with increasing weights $((k-1)/d)^2$. Therefore, intuitively, the variance in $\mathrm{Im}(U_k^*)$ is expected to decrease with $k$. In other words, the points that are far from the center---i.e., with large norm---should be in the first principal subspaces. Hence, we can interpret our new criterion as a robust generalization of the nested PCA (Theorem~\ref{thm:flag_trick}), with a least influence of outliers due to the square root.

Moreover, we can see that numerical issues are less prone to occur with the flag-tricked problem~\eqref{eq:RSR_Fl} than with the original one~\eqref{eq:RSR_Gr}. Indeed, the numerical issues of the square root arise when the radicand ${\|x_i - \frac{1}{d}\sum_{k=1}^d \Pi_{\S_k} x_i\|}_2^2$ is close to zero. By nestedness, this is the case if and only if $\S_1$ is close to $x_i$. Consequently, while numerical issues arise in classical RSR~\eqref{eq:RSR_Gr} whenever the subspace $\S$ is close to any data point, they arise in flag-tricked RSR~\eqref{eq:RSR_Fl} whenever both the subspace $\S$ \textit{and} the smaller nested subspaces are close to that data point. Hence, whenever $q_1\coloneqq \min \qf$ is smaller than the dimension $q$ that one would have tried for classical RSR, the non-differentiability and exploding-gradient issues caused by the square root are less likely to occur.

Finally, since the flag-tricked problem~\eqref{eq:RSR_Fl} is nothing but a robust version of the nested PCA of Theorem~\ref{thm:flag_trick} ($\sum_{i=1}^n {\|x_i - \Pi_{\Sf} x_i\|}_2^2$), a natural idea can be to initialize the optimization algorithm with the nested PCA solution. This is what is done in~\citet{maunu_well-tempered_2019} for the original problem~\eqref{eq:RSR_Gr}, and it is coming with exact recovery guarantees.

\subsubsection{Nestedness Experiments for RSR}
We first consider a dataset consisting in a mixture of two multivariate Gaussians: the inliers, with zero mean, covariance matrix $\diag{5, 1, .1}$, $n_\mathrm{in} = 450$ and the outliers, with zero mean, covariance matrix $\diag{.1, .1, 5}$, $n_\mathrm{out} = 50$. The dataset is therefore following the generalized haystack model of~\citet{maunu_well-tempered_2019}.
The ambient dimension is $p = 3$ and the intrinsic dimensions that we try are $q_{1:2} = (1, 2)$.
We run Algorithm~\ref{alg:GD} on Grassmann manifolds to solve the LAD problem~\eqref{eq:RSR_Gr}, successively for $q_1 = 1$ and $q_2 = 2$. Then we plot the projections of the data points onto the optimal subspaces. We compare them to the nested projections onto the optimal flag output by running Algorithm~\ref{alg:GD} on $\Fl(3, (1, 2))$ to solve~\eqref{eq:RSR_Fl}. The results are shown in Figure~\ref{fig:RSR_nested}.
\begin{figure}[t]
	\centering
    \includegraphics[width=\linewidth]{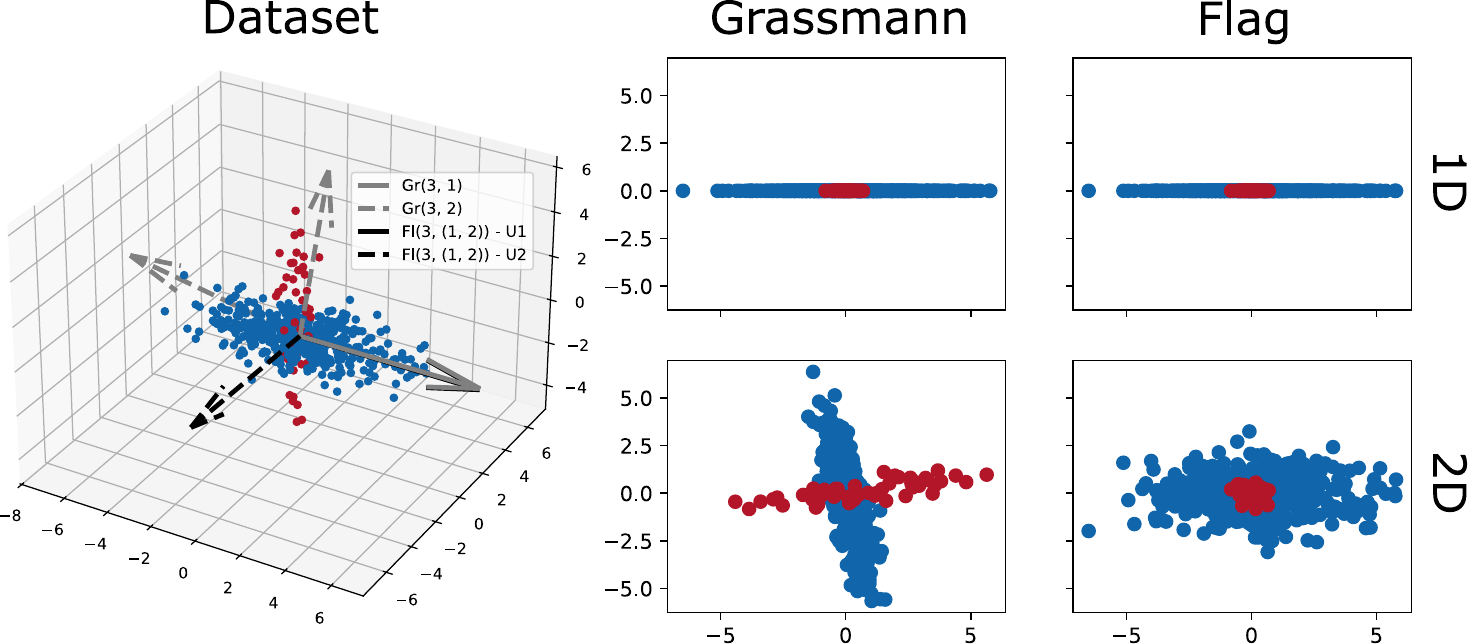}
    \caption{
    Illustration of the nestedness issue in robust subspace recovery. Given a dataset consisting in a mixture of inliers (blue) and outliers (red) we plot its projection onto the optimal 1D subspace and 2D subspace obtained by solving the associated Grassmannian optimization problem~\eqref{eq:RSR_Gr} or flag optimization problem~\eqref{eq:RSR_Fl}. 
    We can see that the Grassmann representations are not nested, while the flag representations are nested and robust to outliers.}
	\label{fig:RSR_nested}
\end{figure}
\begin{figure}[t]
	\centering
    \includegraphics[width=\linewidth]{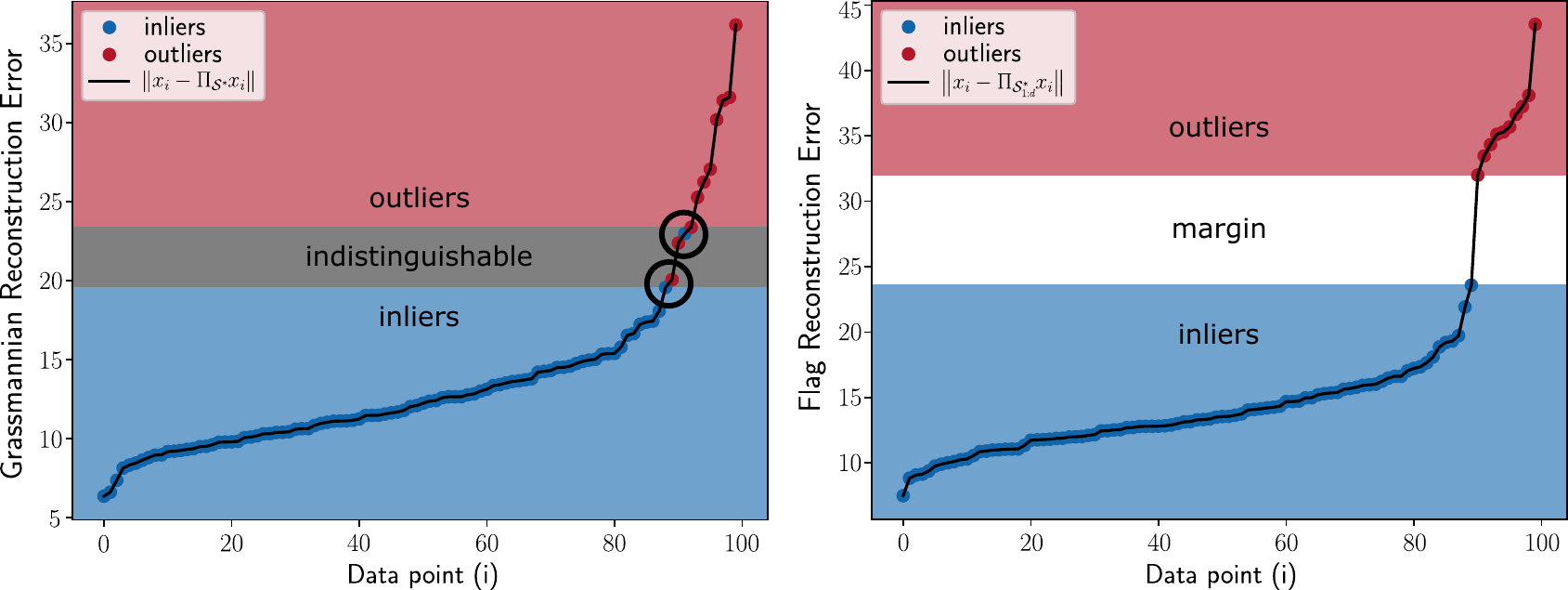}
    \caption{Euclidean reconstruction errors (sorted in ascending order) on the corrupted digits dataset for robust subspace recovery ($\|x_i - \Pi_{\S^*} x_i\|$, left) and {its flag-tricked version} ($\|x_i - \Pi_{\S_{1:d}^*} x_i\|$, right). 
    With the Grassmann-based method, the distributions of reconstruction errors for the inliers and outliers intersect, meaning that we cannot fully distinguish the inliers from the outliers (see the gray zone on the left plot). In contrast, with the flag-based method, the distributions of reconstruction errors for the inliers and outliers are clearly separated, meaning that we can easily distinguish the inliers from the outliers (see the white zone on the right plot). This {phenomenon} can be explained by the multilevel nature of the flag trick (see the main text for more details).}
	\label{fig:RSR_outlier}
\end{figure}
We can see that the Grassmann-based projections are non-nested while their flag counterparts are not only nested but also robust to outliers. This could be explained by the nestedness constraint of flag manifolds which imposes the 2D subspace to contain the 1D subspace.

Second, we perform an outlier detection experiment. A common methodology to detect outliers in a corrupted dataset is to first look for an outlier-robust subspace and then plot the distribution of distances between the data points and their projection onto the recovered subspace. This distribution is expected to show a clear gap between the inliers and outliers~\citep[Fig.~3.7]{vidal_generalized_2016}.
However, in practice, one does not know which subspace dimension $q$ to choose. If $q$ is too large, then the recovered subspace may contain both inliers and outliers, and therefore the distribution of distances might be roughly $0$. In contrast, if $q$ is too small, then some inliers may lie too far from the recovered subspace and be detected as outliers. An idea in the spirit of the flag trick is to perform an average ensembling of the reconstruction errors. More specifically, if $\norm{x_i - \Pi_\S x_i}_2$ is the classical measure of robust reconstruction error, then we compute $\norm{x_i - \Pi_{\S_{1:d}} x_i}_2$. Such a score extracts information from projections at different levels and might result in a richer multilevel analysis.
We consider a dataset where the inliers are images of $8 \times 8$ handwritten $0$'s and outliers correspond to other digits from $1$ to $9$, all extracted from the classical handwritten digits dataset~\citep{alpaydin_optical_1998}.
The ambient dimension is $p = 64$, the number of inliers is $n_\mathrm{in} = 90$ and the number of outliers is $n_\mathrm{out} = 10$. The intrinsic dimensions that we try are $q_{1:3} = (1, 2, 5)$.
We plot the distribution of reconstruction errors for the points of the digits dataset on the optimal flag $\S_{1:3}^* \in \Fl(p, (1, 2, 5))$ in Figure~\ref{fig:RSR_outlier}. We compare it to the distribution of reconstruction errors on $\S^* \in \Gr(p, 5)$.
We can see that the flag trick enables to clearly distinguish the inliers from the outliers compared to the Grassmann-based method, which is a consequence of the multilevel nature of flags.

\subsubsection{Discussion on RSR Optimization and Objective Functions}\label{subsubsec:RSR_discu}
To substantiate the generality of the flag trick, we propose two extensions of the previously proposed multilevel RSR method: an IRLS optimization algorithm and a collection of new multilevel criteria. The experimentation of these extensions is left to future work since the goal is rather to show how the flag trick principle can be easily applied in different contexts.

\paragraph{An IRLS algorithm}
In all the experiments of this paper, we use a steepest descent method on flag manifolds (Algorithm~\ref{alg:GD}) to solve the flag problems.
However, for the specific problem of RSR~\eqref{eq:RSR_Fl}, we believe that more adapted algorithms should be derived, notably due to the non-differentiability and exploding-gradient issues caused by the square root.
To that extent, we derive in Appendix~\ref{app:RSR} an IRLS scheme (Algorithm~\ref{alg:FMF}) for RSR. In short, the RSR problem~\eqref{eq:RSR_Fl} can be reformulated as a weighted least squares problem $\sum_{i=1}^n w_i \norm{x_i - \Pi_{\Sf} x_i}_2^2$ with $w_i ={1}/{\norm{x_i - \Pi_{\Sf} x_i}_2}$ and optimized iteratively, with explicit expressions obtained via our central Theorem~\ref{thm:flag_trick}. We insist on the fact that such an IRLS algorithm could not be developed with the flagification of~\citet{pennec_barycentric_2018} and~\citet{mankovich_fun_2024}, since a sum of square roots does not correspond to a least squares problem.

\paragraph{More RSR problems}
In this work, we explore one specific problem of RSR for conciseness, but we could investigate many other related problems, including Robust PCA. 
Notably, drawing from the Grassmann averages (GA) method~\citep{hauberg_scalable_2016}, one could develop many new multilevel RSR and Robust PCA objective functions.
The idea behind GA is to replace data points with 1D subspaces ($\S_i = \operatorname{Span}(x_i)$) and then perform subspace averaging methods to find a robust prototype for the dataset. GA ends up solving problems of the form $\operatorname{argmin}_{\S \in \Gr(p, 1)} \sum_{i=1}^n w_i \, \operatorname{dist}_{\Gr(p, 1)}^2(\operatorname{Span}(x_i), \S)$, where $w_i$ are some weights and $\operatorname{dist}_{\Gr(p, 1)}$ is a particular subspace distance detailed in~\citet{hauberg_scalable_2016}. Using instead some multidimensional subspace distances, like the principal angles and its variants~\citep{hamm_grassmann_2008, ye_schubert_2016}, we can develop many variants of the Grassmann averages, of the form $\operatorname{argmin}_{\Sf \in\Gr(p, q)} \sum_{i=1}^n w_i \, \rho(\sqrt{{x_i}\T  \Pi_{\S} {x_i}})$, where $\rho\colon\R\to\R$ is a real function, like $\rho(x) = \arccos(x)$ if we want subspace-angle-like distances, $\rho(x) = - x^2$ if we want PCA-like solutions, and many other possible robust variants.
Applying the flag trick to those problems yields the following robust multilevel problem: $\operatorname{argmin}_{\Sf \in\Fl(p, \qf)} \sum_{i=1}^n w_i \rho(\sqrt{{x_i}\T  \Pi_{\Sf} {x_i}})$.

\subsection{The Flag Trick for Trace Ratio Problems}\label{subsec:TR}
As shown in introduction and particularly in Figure~\ref{fig:motivations}, trace ratio methods based on Grassmannian optimization such as linear discriminant analysis might suffer from the nestedness issue. This notably yields inconsistent data representations across different dimensions, as seen on the middle part of the figure, where the clusters get rotated and reflected, leading to cluster identification issues. In this subsection, we address the nestedness issue by applying the flag trick (Definition~\ref{def:flag_trick}) to trace ratio problems~\citep{ngo_trace_2012}.

Trace ratio (TR) problems are ubiquitous in machine learning~\citep{ngo_trace_2012}:
\begin{equation}\label{eq:TR_St}
\argmax_{U \in \St(p, q)} \frac{\tr{U\T A U}}{\tr{U\T B U}},
\end{equation}
where $A, B \in \R^{p\times p}$ are positive semi-definite matrices, with $\operatorname{rank}(B) > p - q$. A famous example of trace ratio problem is Fisher's linear discriminant analysis (LDA)~\citep{fisher_use_1936,belhumeur_eigenfaces_1997}.
It is common in machine learning to project the data onto a low-dimensional subspace before fitting a classifier, in order to circumvent the curse of dimensionality. It is well known that performing an unsupervised dimension reduction method like PCA comes with the risks of mixing up the classes, since the directions of maximal variance are not necessarily the most discriminating ones~\citep{chang_using_1983}. The goal of LDA is to use the knowledge of the data labels to learn a linear subspace that does not mix the classes.
Let $~{X \coloneqq \bmat{x_1 & \cdots & x_n} \in \R^{p\times n}}$ be a dataset with labels $~{Y \coloneqq \bmat{y_1 & \cdots & y_n} \in {\{1, \dots, C\}}^n}$. Let $\mu = \frac{1}{n} \sum_{i=1}^n x_i$ be the dataset mean and $\mu_c = \frac{1}{\#\{i\colon y_i=c\}}\sum_{i\colon y_i=c} x_i$ be the class-wise means. 
The idea of LDA is to search for a subspace $\S \in \Gr(p, q)$ that simultaneously maximizes the projected \textit{between-class variance} $\sum_{c=1}^C {\|\Pi_\S \mu_c - \Pi_\S \mu\|}_2^2$ and minimizes the projected \textit{within-class variance} $\sum_{c=1}^C \sum_{i\colon y_i = c} {\|\Pi_\S x_i - \Pi_\S \mu_c\|}_2^2$. This can be reformulated as a trace ratio problem~\eqref{eq:TR_St}, with $A = \sum_{c=1}^C (\mu_c - \mu) (\mu_c - \mu)\T$ and $B = \sum_{c=1}^C \sum_{i\colon y_i = c} (x_i - \mu_c) (x_i - \mu_c)\T$.

More generally, a large family of dimension reduction methods can be reformulated as a TR problem. The seminal work of~\citet{yan_graph_2007} shows that many dimension reduction and manifold learning objective functions can be written as a trace ratio involving Laplacian matrices of attraction and repulsion graphs. Intuitively, those graphs determine which points should be close in the latent space and which ones should be far apart.
Other methods involving a ratio of traces are \textit{multi-view learning}~\citep{wang_trace_2023}, \textit{partial least squares} (PLS)~\citep{geladi_partial_1986,barker_partial_2003} and \textit{canonical correlation analysis} (CCA)~\citep{hardoon_canonical_2004}, although these methods are originally \textit{sequential} problems (cf. Remark~\ref{rem:sequential}) and not \textit{subspace} problems.

Classical Newton-like algorithms for solving the TR problem~\eqref{eq:TR_St} come from the seminal works of~\citet{guo_generalized_2003, wang_trace_2007, jia_trace_2009}.
The interest of optimizing a trace-ratio instead of a ratio-trace (of the form $\tr{(U\T B U)^{-1}(U\T A U)}$)---which enjoys an explicit solution given by a generalized eigenvalue decomposition---is also tackled in those papers. The \textit{repulsion Laplacians}~\citep{kokiopoulou_enhanced_2009} instead propose to solve a regularized version $\tr{U\T B U} - \rho \tr{U\T A U}$, which enjoys an explicit solution but now has a hyperparameter $\rho$---whereas the latter is directly optimized in the previous approaches.

\subsubsection{Application of the Flag Trick to Trace Ratio Problems}
The trace ratio problem~\eqref{eq:TR_St} can be straightforwardly reformulated as an optimization problem on Grassmannians, due to the orthogonal invariance of the objective function:
\begin{equation}\label{eq:TR_Gr}
\argmax_{\S \in \Gr(p, q)} \frac{\tr{\Pi_\S A}}{\tr{\Pi_\S B}}.
\end{equation}
The following proposition applies the flag trick to the TR problem~\eqref{eq:TR_Gr}.
\begin{proposition}[Flag Trick for TR]\label{prop:TR}
The flag trick transforms the trace ratio problem \eqref{eq:TR_Gr} into: 
\begin{equation}\label{eq:TR_Fl}
	\argmax_{\S_{1:d} \in \Fl(p, q_{1:d})} \frac{\tr{\Pi_{\S_{1:d}} A}}{\tr{\Pi_{\S_{1:d}} B}},
\end{equation}
which is equivalent to the following optimization problem:
\begin{equation}\label{eq:TR_Fl_equiv}
\argmax_{U_{1:d} \in \St(p, q)} \frac{\sum_{k=1}^{d} (d - (k-1)) \tr{{U_k}\T A {U_k}}}{\sum_{l=1}^{d} (d - (l-1)) \tr{{U_{l}}\T B {U_{l}}}}.
\end{equation}
\end{proposition}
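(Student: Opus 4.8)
The plan is to evaluate the average multilevel projector $\Pi_{\S_{1:d}}$ explicitly in the Stiefel representation and substitute it into the ratio. First I would fix a Stiefel representative $U_{1:d} = \bmat{U_1 & \cdots & U_d} \in \St(p, q)$ of the flag $\S_{1:d}$, so that, by nestedness, $\Pi_{\S_k} = \sum_{l=1}^{k} U_l {U_l}\T$. Substituting this into $\Pi_{\S_{1:d}} = \frac{1}{d}\sum_{k=1}^{d} \Pi_{\S_k}$ and swapping the order of summation---exactly the computation carried out in the weighted-projector remark with all weights equal to $1/d$---yields the hierarchical reweighting $\Pi_{\S_{1:d}} = \frac{1}{d}\sum_{l=1}^{d}(d - (l-1))\, U_l {U_l}\T$.

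Next I would apply linearity of the trace together with the cyclic identity $\tr{U_l {U_l}\T M} = \tr{{U_l}\T M U_l}$ to the numerator (with $M = A$) and the denominator (with $M = B$). This gives $\tr{\Pi_{\S_{1:d}} A} = \frac{1}{d}\sum_{l=1}^{d}(d-(l-1))\tr{{U_l}\T A U_l}$ and the analogous expression for $B$. Forming the ratio, the common prefactor $1/d$ cancels between numerator and denominator, producing exactly the claimed Stiefel objective~\eqref{eq:TR_Fl_equiv}.

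Finally I would check that this is a genuine equivalence of optimization problems, not merely of objective values. Each quantity $\tr{{U_l}\T A U_l}$ (and likewise for $B$) is invariant under the block rotations $U_l \mapsto U_l R_l$, $R_l \in \O(q_l - q_{l-1})$, that parametrize the fibers of the quotient~\eqref{eq:Fl_quotient}; hence the Stiefel objective descends to a well-defined function on $\Fl(p, q_{1:d})$ coinciding with the flag objective, so the two argmax sets agree. The only point requiring a little care is that the denominator stays strictly positive: since $B \succeq 0$ and the weights $d-(l-1)$ are positive, $\tr{\Pi_{\S_{1:d}} B}$ vanishes only if every column of $U_{1:d}$ lies in $\ker B$, which is impossible because $\S_d$ is $q$-dimensional while $\dim \ker B = p - \operatorname{rank}(B) < q$ by the assumption $\operatorname{rank}(B) > p - q$. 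I do not anticipate a genuine obstacle here: once the reweighting identity (the same one underlying Theorem~\ref{thm:flag_trick}) is in hand, the equivalence follows by direct computation.
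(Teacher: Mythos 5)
Your proof is correct and follows essentially the same route as the paper's: expand the average projector into the hierarchically reweighted form $\Pi_{\S_{1:d}} = \frac1d\sum_{l=1}^{d}(d-(l-1))\,U_l{U_l}\T$, then apply linearity and cyclicity of the trace and cancel the common factor $1/d$ in the ratio. The only differences are cosmetic---the paper works with an orthogonal representative $U_{1:d+1}\in\O(p)$ whose last block receives weight zero---together with your added checks of invariance under block rotations and strict positivity of the denominator, which the paper leaves implicit (the rank condition is only invoked later, in its Newton-method proposition).
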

\begin{proof}
The proof is given in Appendix~\ref{app:TR}.
\end{proof}
Equation~\eqref{eq:TR_Fl_equiv} tells us several things. First, the subspaces $~{\operatorname{Span}(U_1) \perp \dots \perp \operatorname{Span}(U_d)}$ are weighted decreasingly, which means that they have less and less importance with respect to the TR objective.
Second, we can see that the nested trace ratio problem~\eqref{eq:TR_Fl} somewhat maximizes the numerator $\tr{\Pi_{\S_{1:d}} A}$ while minimizing the denominator $\tr{\Pi_{\S_{1:d}} B}$. Both subproblems have an explicit solution corresponding to our nested PCA Theorem~\ref{thm:flag_trick}. Hence, one can naturally initialize the steepest descent algorithm with the $q$ highest eigenvalues of $A$ or the $q$ lowest eigenvalues of $B$ depending on the application.
For instance, for LDA, initializing Algorithm~\ref{alg:GD} with the highest eigenvalues of $A$ would spread the classes far apart, while initializing it with the lowest eigenvalues of $B$ would concentrate the classes, which seems less desirable since we do not want the classes to concentrate at the same point.

\subsubsection{Nestedness Experiments for Trace Ratio Problems}
We run some nestedness and classification experiments for the specific Trace Ratio problem of LDA. Many other applications (\textit{marginal Fisher analysis}~\citep{yan_graph_2007}, \textit{local discriminant embedding}~\citep{chen_local_2005}, etc.) could be similarly investigated.

First, we consider a synthetic dataset with five clusters.
The ambient dimension is $p = 3$ and the intrinsic dimensions that we try are $q_{1:2} = (1, 2)$.
We adopt a preprocessing strategy similar to~\citet{ngo_trace_2012}: we first center the data, then run a PCA to reduce the dimension to $n - C$ (if $n - C < p$), then construct the LDA scatter matrices $A$ and $B$, then add a diagonal covariance regularization of $10^{-5}$ times their trace and finally normalize them to have unit trace.
We run Algorithm~\ref{alg:GD} on Grassmann manifolds to solve the TR maximization problem~\eqref{eq:TR_Gr}, successively for $q_1 = 1$ and $q_2 = 2$. Then we plot the projections of the data points onto the optimal subspaces. We compare them to the nested projections onto the optimal flag output by running Algorithm~\ref{alg:GD} on $\Fl(3, (1, 2))$ to solve~\eqref{eq:TR_Fl}. The results are shown in Figure~\ref{fig:TR_nested}. 
We can see that the Grassmann representations are non-nested while their flag counterparts perfectly capture the filtration of subspaces that best and best approximates the distribution while discriminating the classes. Even if the colors make us realize that the issue in this experiment for LDA  is not much about the non-nestedness but rather about the rotation of the principal axes within the 2D subspace, we still have an important issue of consistency.
\begin{figure}[t]
	\centering
    \includegraphics[width=\linewidth]{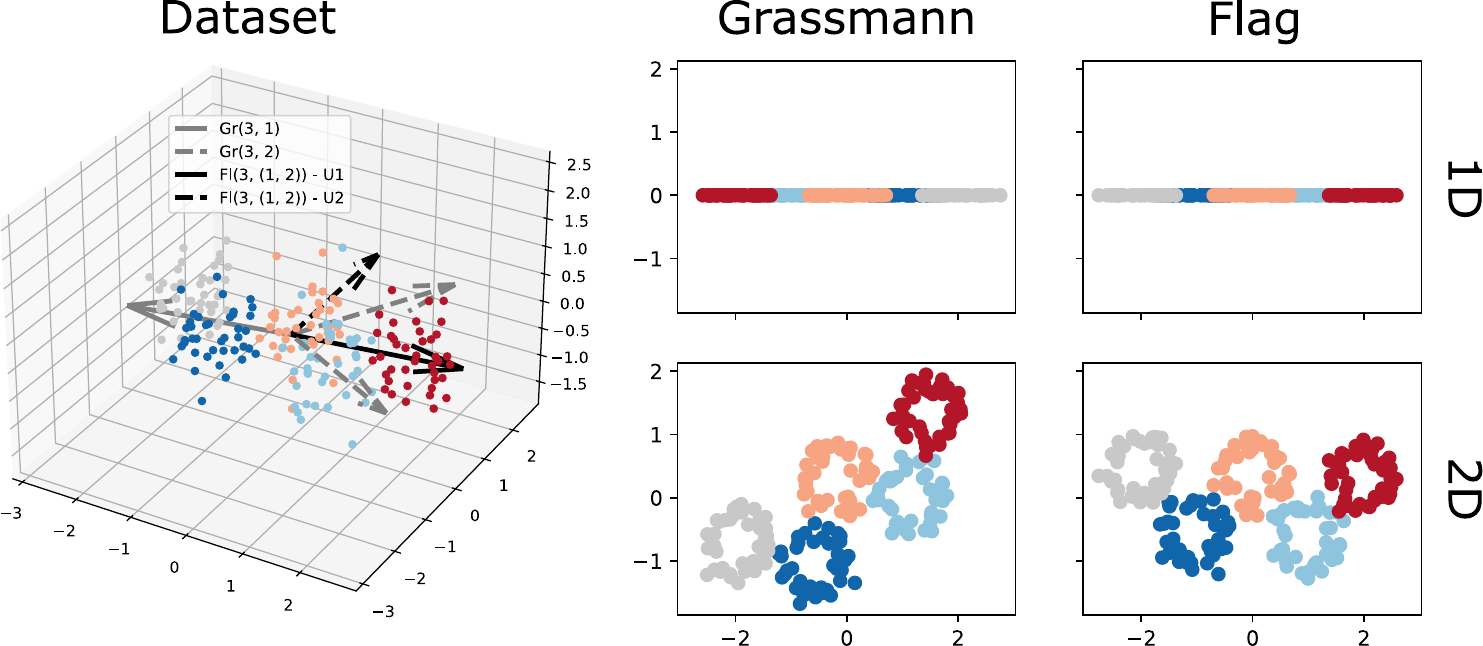}
    \caption{
    Illustration of the nestedness issue in linear discriminant analysis (trace ratio problem). Given a dataset with five clusters, we plot its projection onto the optimal 1D subspace and 2D subspace obtained by solving the associated Grassmannian optimization problem~\eqref{eq:TR_Gr} or flag optimization problem~\eqref{eq:TR_Fl}. 
    We can see that the Grassmann representations are not nested, while the flag representations are nested and well capture the distribution of clusters. In this example, it is less the nestedness than the \textit{rotation} of the optimal axes inside the 2D subspace that is critical to the analysis of the Grassmann-based method.
    }
	\label{fig:TR_nested}
\end{figure}

Second, we consider the (full) handwritten digits dataset~\citep{alpaydin_optical_1998}. It contains $8 \times 8$ pixels images of handwritten digits, from $0$ to $9$, almost uniformly class-balanced. One has $n = 1797$, $p=64$ and $C = 10$.
We run a steepest descent algorithm to solve the trace ratio problem~\eqref{eq:TR_Fl}. We choose the \textit{full signature} $q_{1:d} = (1, 2, \dots, 63)$ {with $d=63$} and compare the output flag to the individual subspaces output by running optimization on $\Gr(p, q_k)$ for $q_k \in q_{1:d}$.
We plot the subspace angles $\Theta(\S_k^*, \S_{k+1}^*)$ and the explained variance ${\operatorname{tr}(\Pi_{\S_k^*} X X\T)} / {\operatorname{tr}(X X\T)}$ as a function of the $k$. The results are illustrated in Figure~\ref{fig:TR_nested_digits}.
We see that the subspace angles are always positive and even very large sometimes with the LDA. Worst, the explained variance is not monotonic. This implies that we sometimes \textit{lose} some information when \textit{increasing} the dimension, which is extremely paradoxical.
\begin{figure}[t]
	\centering
    \includegraphics[width=\linewidth]{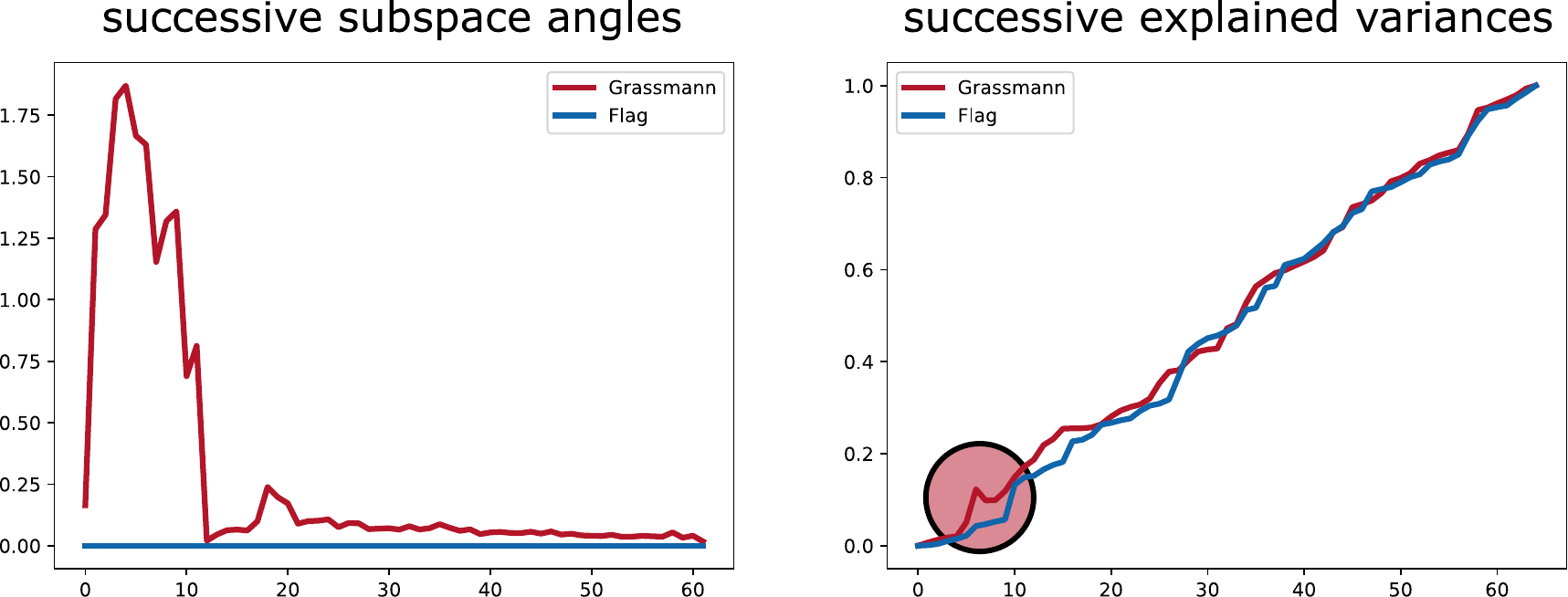}
    \caption{
    Illustration of the nestedness issue in linear discriminant analysis (trace ratio problem) on the digits dataset. For $q_k \in (1, 2, \dots, 63)$, we solve the Grassmannian optimization problem~\eqref{eq:TR_Gr} on $\Gr(64, q_k)$ and plot the subspace angles $\Theta(\S_k^*, \S_{k+1}^*)$ (left) and explained variances ${\operatorname{tr}(\Pi_{\S_k^*} X X\T)} / {\operatorname{tr}(X X\T)}$ (right) as a function of $k$. We compare those quantities to the ones obtained by solving the flag optimization problem~\eqref{eq:TR_Fl}. 
    We can see that the Grassmann-based method is highly non-nested and even yields an extremely paradoxical non-increasing explained variance (cf. red circle on the right).
    }
	\label{fig:TR_nested_digits}
\end{figure}

Third, we perform some classification experiments on the optimal subspaces. For each dataset, we run the optimization problems on $\Fl(p, q_{1:d})$, then project the data onto the different subspaces in $\S_{1:d}^*$ and run a nearest neighbors classifier with $5$ neighbors.
The predictions are then ensembled (cf. Algorithm~\ref{alg:flag_trick}) by weighted averaging, either with uniform weights or with weights minimizing the average cross-entropy:
\begin{equation}\label{eq:soft_voting}
	w_1^*, \dots, w_d^* = \argmin_{\substack{w_k \geq 0 \\ \sum_{k=1}^d w_k = 1}} - \frac 1 {n C} \sum_{i=1}^n \sum_{c=1}^C y_{ic} \ln\lrp{\sum_{k=1}^d w_k y_{kic}^*},
\end{equation}
where $y_{kic}^* \in [0, 1]$ is the predicted probability that $x_i \in \R^p$ belongs to class $c \in \{1, \dots, C\}$, by the classifier $g_k^*$ that is trained on $Z_k \coloneqq {U_k^*}\T X \in \R^{q_k \times n}$. One can show that the latter is a convex problem, which we optimize using the \href{https://www.cvxpy.org/index.html}{cvxpy} Python package~\citep{diamond2016cvxpy}.
We repeat the experiment 10 times in a stratified train-test fashion and report the average test cross-entropy in Table~\ref{tab:TR_classif}.
\begin{table}
  \caption{Results of the TR-based classification experiment. For each method (Gr: Grassmann optimization~\eqref{eq:TR_Gr}, Fl: flag optimization~\eqref{eq:TR_Fl}, Fl-U: flag optimization + uniform soft voting, Fl-W: flag optimization + optimal soft voting~\eqref{eq:soft_voting}), we give the average test cross-entropy between the predictions and the true labels.}
  \label{tab:TR_classif}
  \centering
  \begin{tabular}{ccccccccc}
    \toprule
    dataset & $n$ & $p$ & $q_{1:d}$ & Gr & Fl & Fl-U & Fl-W & weights\\
    \midrule
    digits & $1797$ & $64$ & $(1, 2, 5, 10)$ & $5.1$ & $4.6$ & $3.1$ & $2.9$ & $(0.03, 0.05, 0.30, 0.62)$\\
    wine & $178$ & $13$ & $(1, 2, 5)$ & $0.71$ & $0.69$ & $0.43$ & $0.29$ & $(0, 0.68, 0.32)$\\
    breast & $569$ & $30$ & $(1, 2, 5)$ & $0.534$ & $0.537$ & $0.485$ & $0.475$ & $(0.26, 0.24, 0.50)$\\
    iris & $150$ & $4$ & $(1, 2, 3)$ & $0.275$ & $0.271$ & $0.281$ & $0.265$ & $(0.27, 0.12, 0.62)$\\
    \bottomrule
  \end{tabular}
\end{table}
The wine example is particularly interesting. It first tells us that the optimal 5D subspace obtained by Grassmann optimization discriminates less between the classes than the 5D subspace from the optimal flag. This may show that the flag takes into account some lower dimensional variability that enables to better discriminate the classes. Second, the uniform averaging of the predictors at different dimensions improves the classification. Third, the optimal weights improve even more the classification and tell that the best discrimination occurs by taking a soft blend of classifiers at dimensions $2$ and $5$. Similar kinds of analyses can be made for the other examples.

\subsubsection{Discussion on TR Optimization and Kernelization}
To substantiate the generality of the flag trick, we propose two extensions of the previously proposed multilevel TR method: a Newton optimization algorithm and a kernelization. The experimentation of these extensions is left to future work since the goal is rather to show how the flag trick principle can be easily applied in different contexts.

\paragraph{A Newton algorithm}
In all the experiments of this paper, we use a steepest descent method on flag manifolds (Algorithm~\ref{alg:GD}) to solve the flag problems.
However, for the specific problem of TR~\eqref{eq:TR_Fl}, we believe that more adapted algorithms should be derived to take into account the specific form of the objective function, which is classically solved via a Newton-Lanczos method~\citep{ngo_trace_2012}. 
To that extent, we develop in the appendix (\ref{app:TR}) an extension of the baseline Newton-Lanczos algorithm for the flag-tricked problem~\eqref{eq:TR_Fl}.
In short, the latter can be reformulated as a penalized optimization problem of the form $\operatorname{argmax}_{\Sf\in\Fl(p, \qf)} {\sum_{k=1}^d \tr{\Pi_{\S_k} (A - \rho B)}}$, where $\rho$ is updated iteratively according to a Newton scheme. Once again, our central Theorem~\ref{thm:flag_trick} enables to get an explicit solution to the penalized optimization problem.

\paragraph{A non-linearization via the kernel trick}
The classical trace ratio problems look for \textit{linear} embeddings of the data.
However, in most cases, the data follow a \textit{nonlinear} distribution, which may cause linear dimension reduction methods to fail. The \textit{kernel trick}~\citep{hofmann_kernel_2008} is a well-known method to embed nonlinear data into a linear space and fit linear machine learning methods.
As a consequence, we propose in appendix (\ref{app:TR}) a kernelization of the trace ratio problem~\eqref{eq:TR_Fl} in the same fashion as the one of the seminal graph embedding work~\citep{yan_graph_2007}.
This is expected to yield much better embedding and classification results.

\subsection{The Flag Trick for Spectral Clustering Problems}
As shown in introduction and particularly in Figure~\ref{fig:motivations}, spectral clustering methods based on Grassmannian optimization might suffer from the nestedness issue, which notably yields inconsistent data representations across different dimensions. This can be seen on the right part of the figure, where the clusters are almost separable in 1D but surprisingly intertwine in 2D. In this subsection, we address the nestedness issue by applying the flag trick (Definition~\ref{def:flag_trick}) to Grassmann-based sparse spectral clustering~\citep{wang_grassmannian_2017}.

Spectral clustering~\citep{ng_spectral_2001} is a baseline clustering technique. It can be applied to general cluster distributions compared to the classical k-means (round clusters) and Gaussian mixture models (ellipsoidal clusters).
Given a dataset $X \coloneqq \bmat{x_1 & \cdots & x_n} \in \R^{p\times n}$, spectral clustering roughly consists in the eigenvalue decomposition of a Laplacian matrix $L\in\Sym(n)$ associated to a pairwise similarity matrix, for instance $M_{ij} = \exp(\norm{x_i - x_j}^2 / 2\sigma^2)$. The eigenvectors are then used as new embeddings for the data points, on which standard clustering algorithms like k-means can be performed. This method is closely related to the celebrated Laplacian eigenmaps~\citep{belkin_laplacian_2003} which are used for nonlinear dimension reduction.
The good performances of such a simple method as spectral clustering are theoretically justified by the particular structure of the Laplacian matrix $L$ in an ideal case---block-diagonal with a multiple eigenvalue related to the number of clusters~\citep{ng_spectral_2001}.
The recent \textit{sparse spectral clustering} (SSC) method~\citep{lu_convex_2016} builds on such an ideal case and encourages the block-diagonality by looking for a sparse and low-rank representation:
\begin{equation}\label{eq:SSC_lu}
	\argmin_{U \in \St(n, q)} \langle U U\T, L\rangle_F + \beta \norm{U U\T }_1,
\end{equation}
that they optimize over the convex hull of Grassmann manifolds with an ADMM algorithm.

\subsubsection{Application of the Flag Trick to Spectral Clustering}
The Grassmannian spectral clustering method~\citep{wang_grassmannian_2017} directly optimizes~\eqref{eq:SSC_lu} on Grassmann manifolds:
\begin{equation}\label{eq:SSC_Gr}
	\argmin_{\S \in \Gr(n, q)} \langle \Pi_\S, L\rangle_F + \beta \norm{\Pi_\S }_1.
\end{equation}
The authors use a Riemannian trust region method~\citep{absil_optimization_2008} for optimization and show the interest of directly optimizing over the Grassmann manifold instead of convexifying the optimization space~\citep{lu_convex_2016}.
The following proposition applies the flag trick to SSC.
\begin{proposition}[Flag Trick for SSC]\label{prop:SSC}
The flag trick transforms the sparse spectral clustering problem~\eqref{eq:SSC_Gr} into:
\begin{equation}\label{eq:SSC_Fl}
	\argmin_{\S_{1:d} \in \Fl(n, q_{1:d})} \langle \Pi_{\S_{1:d}}, L\rangle_F + \beta \norm{\Pi_{\S_{1:d}} }_1,
\end{equation}
which is equivalent to the following optimization problem:
\begin{equation}\label{eq:SSC_Fl_equiv}
	\argmin_{U_{1:d} \in \St(n, q)} \sum_{k=1}^d (d - (k-1)) \tr{{U_k}\T L U_k} + \beta \norm{\sum_{k=1}^d (d - (k-1)) U_k {U_k}\T}_1.
\end{equation}
\end{proposition}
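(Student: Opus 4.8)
The proof splits naturally into two parts. The passage from the Grassmannian problem~\eqref{eq:SSC_Gr} to the flag problem~\eqref{eq:SSC_Fl} is immediate from Definition~\ref{def:flag_trick}: the flag trick prescribes replacing $\Pi_\S$ by the average multilevel projector $\Pi_{\Sf}$, so there is nothing to establish there. All the content lies in the equivalence between the flag problem~\eqref{eq:SSC_Fl} and the Stiefel-representation problem~\eqref{eq:SSC_Fl_equiv}. The plan is to expand $\Pi_{\Sf}$ into its mutually-orthogonal blocks and substitute into each of the two terms of the objective.

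First I would fix a Stiefel representative $\Uf = \bmat{U_1 & \cdots & U_d}$ of the flag $\Sf \in \Fl(n, \qf)$, so that $\Pi_{\S_k} = \sum_{l=1}^k U_l {U_l}\T$. Swapping the order of summation---exactly the computation of the Weighted Projectors remark with uniform weights $\alpha_k = 1/d$---yields the single identity that drives everything:
\begin{equation*}
\Pi_{\Sf} = \frac 1 d \sum_{k=1}^d \Pi_{\S_k} = \frac 1 d \sum_{l=1}^{d} \lrp{d - (l-1)} U_l {U_l}\T.
\end{equation*}
The remainder is substitution. For the linear term I would use that $\Pi_{\Sf}$ and $L$ are symmetric to write $\langle \Pi_{\Sf}, L\rangle_F = \tr{\Pi_{\Sf} L}$, then insert the block expansion and invoke cyclicity of the trace to obtain $\frac 1 d \sum_{k=1}^d (d-(k-1)) \tr{{U_k}\T L U_k}$.

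The only point requiring real care---and the step I expect to be the main obstacle, modest as it is---is the sparsity term. The key observation is that $\norm{\cdot}_1$ denotes the entrywise $\ell^1$ norm and is therefore positively homogeneous, so the nonnegative factor $1/d$ pulls straight out: $\norm{\Pi_{\Sf}}_1 = \frac 1 d \norm{\sum_{k=1}^d (d-(k-1)) U_k {U_k}\T}_1$. Collecting the two terms, the flag objective equals $\frac 1 d$ times the bracketed expression of~\eqref{eq:SSC_Fl_equiv}; since scaling an objective by a fixed positive constant leaves the argmin unchanged, the two problems are equivalent.

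It remains to justify that the Stiefel objective~\eqref{eq:SSC_Fl_equiv} descends to a well-defined function on the quotient~\eqref{eq:Fl_quotient}. This holds because both $U_k {U_k}\T$ and $\tr{{U_k}\T L U_k}$ are invariant under the block rotations $U_k \mapsto U_k R_k$ with $R_k \in \O(q_k - q_{k-1})$, so the objective depends only on the flag $\Sf$ and not on the chosen representative, which closes the argument.
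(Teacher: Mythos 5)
Your proposal is correct and follows essentially the same route as the paper's proof: expand the average multilevel projector $\Pi_{\S_{1:d}}$ into its weighted mutually-orthogonal blocks $\frac1d\sum_{k}(d-(k-1))U_k{U_k}\T$, substitute into the linear term via cyclicity of the trace, and pull the factor $1/d$ through the $\ell^1$ term. Your extra touches---invoking positive homogeneity explicitly for the $\ell^1$ norm, noting that positive scaling preserves the argmin, and checking invariance under block rotations so the Stiefel objective descends to the flag quotient---are sound refinements of what the paper leaves implicit, not a different argument.
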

\begin{proof}
The proof is given in Appendix~\ref{app:SSC}.
\end{proof}
We can see that the case $\beta=0$ corresponds to classical spectral clustering. Indeed, with a similar reasoning as in the proof of Theorem~\ref{thm:flag_trick}, we can easily show that the solution to problem~\eqref{eq:SSC_Fl} is explicit and corresponds to the flag of nested eigenspaces of $L$ (for increasing eigenvalues). Therefore, initializing the algorithm with the smallest $q$ eigenvectors of $L$ seems like a natural idea.
Moreover, one may intuitively analyze the relative weighting of the mutually-orthogonal subspaces $\operatorname{Span}(U_1)\perp\dots\perp\operatorname{Span}(U_d)$ in~\eqref{eq:SSC_Fl_equiv} as favoring a model with $q_1$ clusters, and then adding successively $q_{k} - q_{k-1}$ clusters to improve the modeling of the Laplacian matrix, with a tradeoff between too much and too few clusters.

\subsubsection{Nestedness Experiments for Spectral Clustering}
We run some nestedness experiments for the sparse spectral clustering problem.
First, we consider a 3D extension of the classical two-moons dataset for clustering with $n=100$.
The ambient dimension is $p = 3$ and the intrinsic dimensions that we try are $q_{1:2} = (1, 2)$.
We adopt a pre-processing strategy similar to~\citet{lu_convex_2016,wang_grassmannian_2017}: we compute the affinity matrix $W\in\R^{n \times n}$ using an exponential kernel with standard deviation being the median of the pairwise Euclidean distances between samples. Then we compute the normalized Laplacian matrix $L = I_n - D^{-\frac12} L D^{-\frac12}$ where $D\in\R^{n \times n}$ is a diagonal matrix with diagonal elements $D_{ii} = \sum_{j=1}^n w_{ij}$.
We run Algorithm~\ref{alg:GD} on Grassmann manifolds to solve the sparse optimization problem~\eqref{eq:SSC_Gr}, successively for $q_1 = 1$ and $q_2 = 2$. Then we plot the projections of the data points onto the optimal subspaces. We compare them to the nested projections onto the optimal flag output by running Algorithm~\ref{alg:GD} on $\Fl(3, (1, 2))$ to solve~\eqref{eq:SSC_Fl}. The results are shown in Figure~\ref{fig:SSC_nested}.
\begin{figure}[t]
	\centering
    \includegraphics[width=\linewidth]{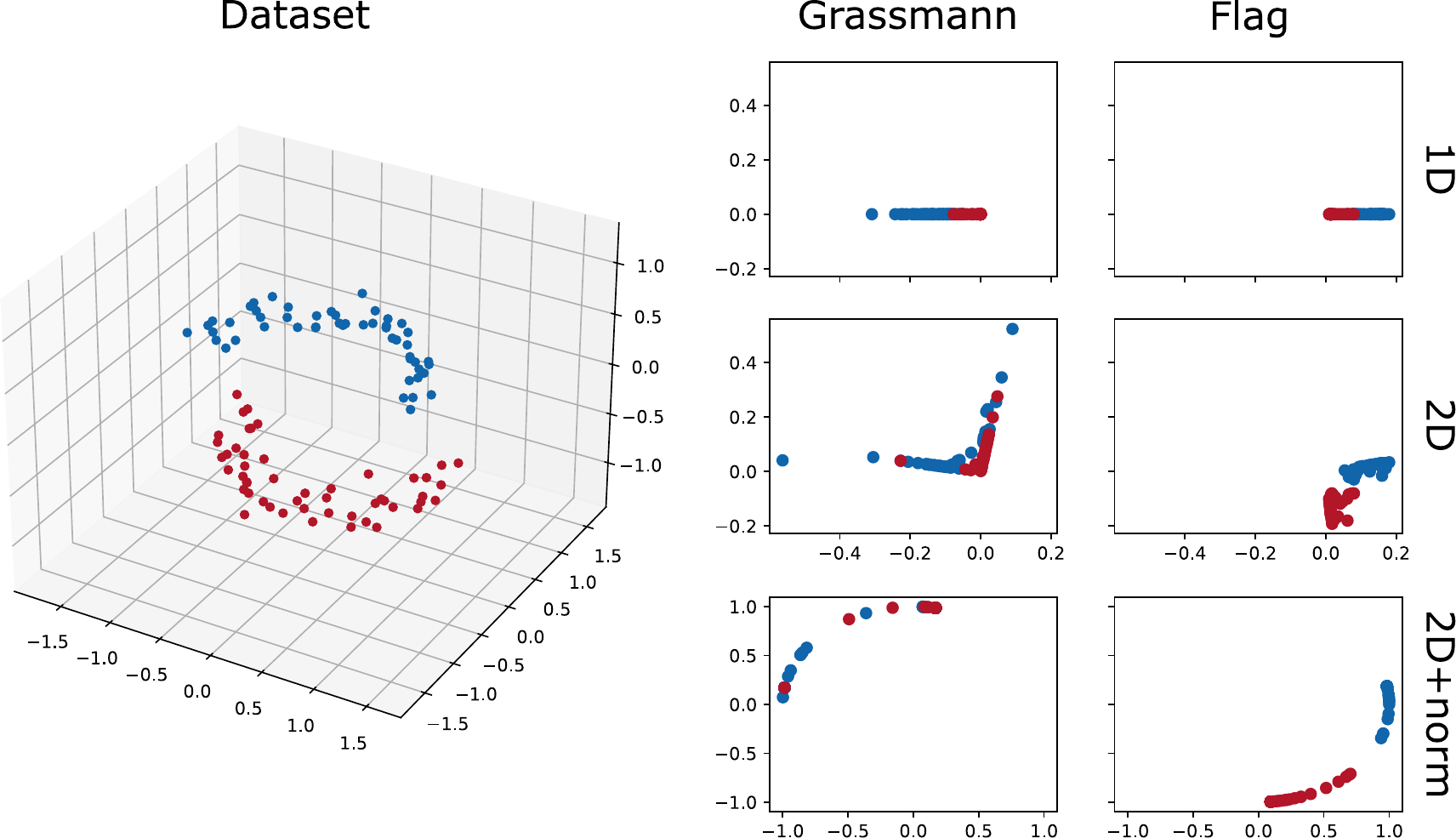}
    \caption{
    Illustration of the nestedness issue in sparse spectral clustering. Given a dataset with two half-moon-shaped clusters, we plot its projection onto the optimal 1D subspace and 2D subspace obtained by solving the associated Grassmannian optimization problem~\eqref{eq:SSC_Gr} or flag optimization problem~\eqref{eq:SSC_Fl}. 
    We can see that the Grassmann representations are not nested, while the flag representations are nested and much better clustered. 
    The last row corresponds to dividing the 2D embeddings by their norms, as commonly done in spectral clustering~\citep{ng_spectral_2001}.}
	\label{fig:SSC_nested}
\end{figure}
\begin{figure}[t]
	\centering
    \includegraphics[width=\linewidth]{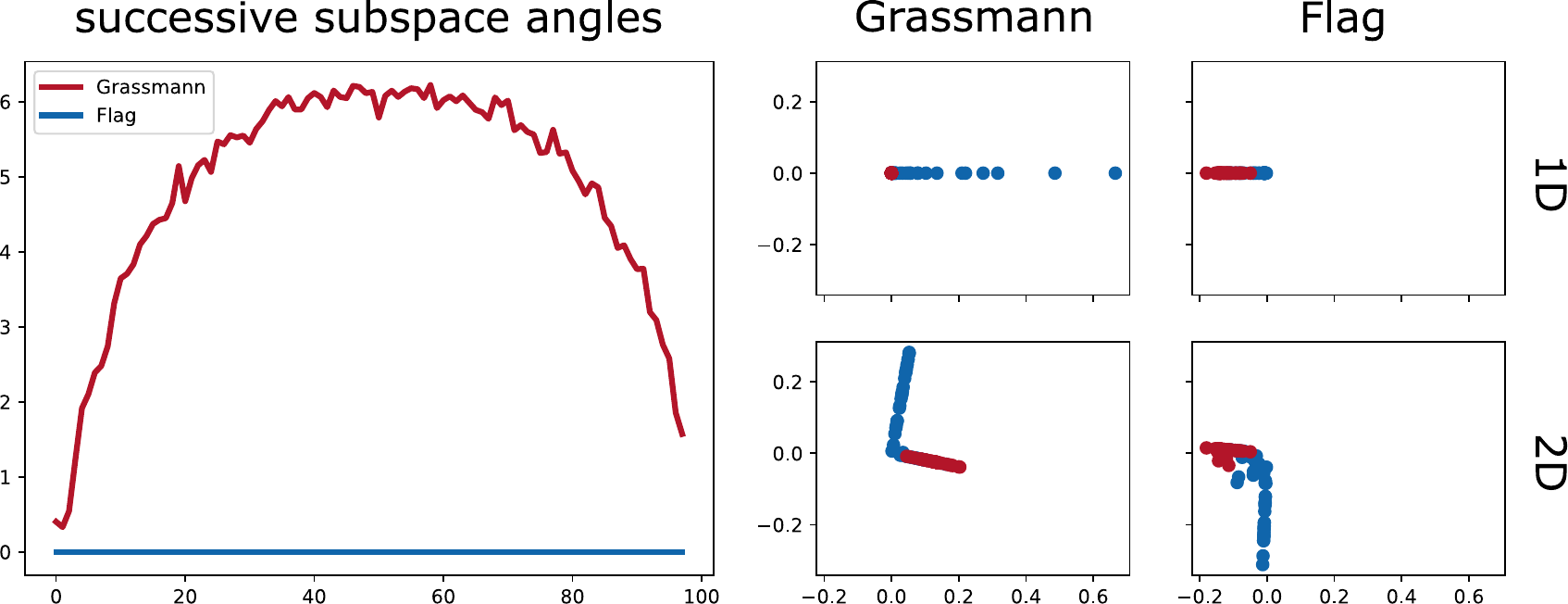}
    \caption{
    Illustration of the nestedness issue in sparse spectral clustering on the breast cancer dataset. On the left, for $q_k \in (1, 2, \dots, 99)$, we solve the Grassmannian optimization problem~\eqref{eq:SSC_Gr} on $\Gr(100, q_k)$ and plot the subspace angles $\Theta(\S_k^*, \S_{k+1}^*)$ as a function of $k$. {On the right, we plot the projection of the dataset onto the output 1D and 2D subspaces.} We compare those quantities to the ones obtained by solving the flag optimization problem~\eqref{eq:SSC_Fl}. We can see that the Grassmann-based method is highly non-nested while the flag-tricked one is not only nested but also seems to be coherent with the cluster structure: the first axis is for the red cluster while the second axis is for the blue cluster.
    }
    \label{fig:SC_nested}
\end{figure}
We can see that the Grassmann representations are not only non-nested, but also mix the two clusters in 2D, while the flag representations are nested and much better discriminate the clusters. This could be explained by the nestedness constraint of flag manifolds which imposes the 2D subspace to contain the 1D subspace.

Second, we consider the breast cancer dataset~\citep{wolberg_breast_1995}.
It contains $569$ samples---from which we extract a subset of $100$ samples for computational time---with $30$ numerical attributes and two classes. One has $n = 100$, $p=30$ and $C = 2$. 
Then we run the steepest descent algorithm~\ref{alg:GD} to solve the SSC problem~\eqref{eq:SSC_Fl}.
We choose the \textit{full signature} $q_{1:99} = (1, 2, \dots, 99)$ and compare the output flag to the individual subspaces output by running optimization on $\Gr(n, q_k)$ for $q_k \in q_{1:d}$.
We perform scatter plots in 1D and 2D and subspace error computations as a function of the $q_k$. The results are illustrated in Figure~\ref{fig:SC_nested}.
The subspace angle plot tells us that the Grassmann spectral clustering yields highly non-nested subspaces, while the flags are by nature nested. The scatter plots show that the Grassmann representations are totally inconsistent, while the flag representations are consistent with the cluster structures.

\subsubsection{Discussion on SSC Optimization}\label{subsubsec:SSC_discu}
Similarly to the previous examples, we can see that the steepest descent algorithm~\ref{alg:GD} might not be the most suited to solve the optimization problem~\eqref{eq:SSC_Fl}, notably due to the {$\ell^1$-penalization}. Contrary to robust subspace recovery, where we want to avoid the non-differentiable points (cf. Section~\ref{subsubsec:RSR_discu}), here we would ideally like to {exactly} attain those points, since they encode the desired sparsity of the Laplacian matrix. Therefore, we believe that sparsity-inducing optimization algorithms~\citep{bach_optimization_2012} on flag manifolds would be more adapted than the steepest descent we use.

\subsection{The Flag Trick for Other Machine Learning Problems}
Subspace learning finds many applications beyond robust subspace recovery, trace ratio and spectral clustering problems, as evoked in~Section~\ref{sec:intro}. The goal of this subsection is to provide a few more examples in brief, without experiments.

\subsubsection{Domain Adaptation}
In machine learning, it is often assumed that the training and test datasets follow the same distribution. However, some \textit{domain shift} issues---where training and test distributions are different---might arise, notably if the test data has been acquired from a different source (for instance a professional camera and a phone camera) or if the training data has been acquired a long time ago. \textit{Domain adaptation} is an area of machine learning that deals with domain shifts, usually by matching the training and test distributions---often referred to as \textit{source} and \textit{target} distributions---before fitting a classical model~\citep{farahani_brief_2021}. 
A large body of works (called ``subspace-based'') learn some intermediary subspaces between the source and target data, and perform the inference for the projected target data on these subspaces. The \textit{sampling geodesic flow}~\citep{gopalan_domain_2011} first performs a geodesic interpolation on Grassmannians between the source and target subspaces, then projects both datasets on (a discrete subset of) the interpolated subspaces, which results in a new representation of the data distributions, that can then be given as an input to a machine learning model. The higher the number of intermediary subspaces, the better the approximation, but the larger the dimension of the representation.
The celebrated \textit{geodesic flow kernel}~\citep{boqing_gong_geodesic_2012} circumvents this issue by integrating the projected data onto the continuum of interpolated subspaces. This yields an inner product between infinite-dimensional embeddings that can be computed explicitly and incorporated in a kernel method for learning. The \textit{domain invariant projection}~\citep{baktashmotlagh_unsupervised_2013} learns a \textit{domain-invariant} subspace that minimizes the maximum mean discrepancy (MMD)~\citep{gretton_kernel_2012} between the projected source $X_s \coloneqq \bmat{x_{s1} & \cdots & x_{s n_s}} \in \R^{p\times n_s}$ and target distributions $X_t \coloneqq \bmat{x_{t1} & \cdots & x_{t n_t}} \in \R^{p\times n_t}$:
$\operatorname{argmin}_{U \in \St(p, q)} \operatorname{MMD}^2(U\T X_{s}, U\T X_{t})$,
where 
$\operatorname{MMD} (X, Y) = \norm{\frac 1 n \sum_{i=1}^n \phi (x_i) - \frac 1 m \sum_{i=1}^m \phi (y_i)}_\mathcal{H}$.
Following~\citet[Eq.~(4)]{baktashmotlagh_unsupervised_2013}, we can rewrite the unsupervised domain adaptation problem as:
\begin{multline}\label{eq:DIP_Gr}
\argmin_{\S \in \Gr(p, q)} \, \left(
	\frac 1 {n_s^2} \sum_{i,j=1}^{n_s} \exp\lrp{-\frac{{z^s_{ij}}\T \Pi_\S z^s_{ij}}{2 \sigma^2}}
	+ \frac 1 {n_t^2} \sum_{i,j=1}^{n_t} \exp\lrp{-\frac{{z^t_{ij}}\T \Pi_\S z^t_{ij}}{2 \sigma^2}}\right.\\
	\left.- \frac 2 {n_s n_t} \sum_{i=1}^{n_s} \sum_{j=1}^{n_t} \exp\lrp{-\frac{{z^{st}_{ij}}\T \Pi_\S z^{st}_{ij}}{2 \sigma^2}}\right),
\end{multline}
where $z^s_{ij} \coloneqq x_{si} - x_{sj}$, $z^t_{ij} \coloneqq x_{ti} - x_{tj}$ and $z^{st}_{ij} \coloneqq x_{si} - x_{tj}$.
One can easily apply the flag trick~\eqref{eq:flag_problem} to this problem to make domain adaptation multilevel:
\begin{multline}\label{eq:DIP_Fl}
\argmin_{\S_{1:d} \in \Fl(p, q_{1:d})} \, \left(
	\frac 1 {n_s^2} \sum_{i,j=1}^{n_s} \exp\lrp{-\frac{{z^s_{ij}}\T \Pi_{\S_{1:d}} z^s_{ij}}{2 \sigma^2}}
	+ \frac 1 {n_t^2} \sum_{i,j=1}^{n_t} \exp\lrp{-\frac{{z^t_{ij}}\T \Pi_{\S_{1:d}} z^t_{ij}}{2 \sigma^2}}\right.\\
	\left.- \frac 2 {n_s n_t} \sum_{i=1}^{n_s} \sum_{j=1}^{n_t} \exp\lrp{-\frac{{z^{st}_{ij}}\T \Pi_{\S_{1:d}} z^{st}_{ij}}{2 \sigma^2}}\right).
\end{multline}
Some experiments similar to the ones of~\citet{baktashmotlagh_unsupervised_2013} can be performed. For instance, one can consider the benchmark visual object recognition dataset of~\citet{saenko_adapting_2010}, learn nested domain invariant projections, fit some support vector machines to the projected source samples at increasing dimensions, and then perform soft-voting ensembling by learning the optimal weights on the target data according to Equation~\eqref{eq:soft_voting}.

\subsubsection{Low-Rank Decomposition}
Many machine learning methods involve finding low-rank representations of a data matrix. 
This is the case of \textit{matrix completion}~\citep{candes_exact_2012} where one looks for a low-rank representation of an incomplete data matrix by minimizing the discrepancy with the observed entries, and which finds many applications including the well-known \href{https://en.wikipedia.org/wiki/Netflix_Prize}{Netflix problem}. Although its most-known formulation is as a convex relaxation, it can also be formulated as an optimization problem on Grassmann manifolds~\citep{keshavan_matrix_2010,boumal_rtrmc_2011} to avoid optimizing the nuclear norm in the full space which can be of high dimension. The intuition is that a low-dimensional point can be described by the subspace it belongs to and its coordinates within this subspace. More precisely, the SVD-based low-rank factorization $M = UW$, with $M \in \R^{p \times n}$, $U \in \St(p, q)$ and $W \in \R^{q \times n}$ is orthogonally-invariant---in the sense that for any $R\in\O(q)$, one has $(UR) (R\T W) = U W$. One could therefore apply the flag trick to such problems, with the intuition that we would try low-rank matrix decompositions at different dimensions. The application of the flag trick would however not be as straightforward as in the previous problems since the subspace-projection matrices $\Pi_\S \coloneqq U U\T$ do not appear explicitly, and since the coefficient matrix $W$ also depends on the dimension $q$.

Many other low-rank problems can be formulated as a Grassmannian optimization. \textit{Robust PCA}~\citep{candes_robust_2011} looks for a low rank + sparse corruption factorization of a data matrix. \textit{Subspace Tracking}~\citep{balzano_online_2010} incrementally updates a subspace from streaming and highly-incomplete observations via small steps on Grassmann manifolds.

\subsubsection{Other subspace learning problems}
Finally, many other general machine learning problems involve optimization on Grassmannians. For instance, \textit{linear dimensionality reduction}~\citep{cunningham_linear_2015} encompasses the already-discussed PCA and LDA, but also many other problems like \textit{multi-dimensional scaling}~\citep{torgerson_multidimensional_1952}, \textit{slow feature analysis}~\citep{wiskott_slow_2002} and \textit{locality preserving projections}~\citep{he_locality_2003}. All these machine learning problems involving optimization on Grassmannians can, likewise, be turned into multilevel problems via the flag trick.

\section{Discussion}\label{sec:discussion}

We introduced a general principle to make subspace learning problems multilevel and nested. The non-nestedness issue of Grassmannian-based methods was clearly demonstrated on a variety of machine learning problems, yielding inconsistent scatter plots and paradoxical learning curves. The flag trick was shown to both solve this nestedness issue and output filtrations of subspaces that meet the original goals of the problems. As a positive side-effect, the flag trick sometimes even output substantially-better subspaces than the original subspace problems, due to its multilevel nature which enables to aggregate information from several dimensions.
When combined with ensemble learning, the flag trick showed an overall improvement over the subspace-based predictors that work at a fixed dimension, and it raised some interesting dimension-blending perspectives that question the manifold hypothesis. 
The code for the experiments can be found at \url{https://github.com/tomszwagier/flag-trick}.

One major limitation of this paper is the lack of in-depth study of the specific interest of the flag trick to each subspace learning method. We indeed focus on general practical results---the nestedness of flag-based methods with respect to classical subspace methods and the interest of ensembling dimensions instead of considering each dimension individually---and not on some application-specific questions---like the robustness of the flag to outliers for robust subspace recovery, the quality of the embedding for trace ratio problems, and the quality of the clustering for spectral clustering problems. 
Therefore, this paper should rather be seen as an invitation for the different communities to explore the multilevel extensions of subspace learning, than as the unique solution to a long-standing issue in machine learning.
To that extent, we end this paper with a list of perspectives.

First, one should perform in-depth experiments to show the interest and the limitations of the flag trick for each specific subspace learning problem among---but not limited to---the list of examples developed in Section~\ref{sec:examples}. Throughout this paper, we focused on linear flags, but nonlinear extensions could be considered to improve expressivity. Beyond the kernel trick considered in~Section~\ref{subsec:kernel} for nonlinear graph embedding, a general methodology based on data mappings to linear spaces---such as the latent space of a deep generative model or the tangent space at the Fréchet mean of a manifold-valued dataset---could be simply implemented. More advanced methodologies relying on nested submanifolds, borrowed from the geometric statistics community (see~Section~\ref{sec:intro}), could be investigated as optimization problems on flag manifolds; typical examples are the principal nested spheres from~\citet{jung_analysis_2012}, which can be seen as intersections between spheres and (linear) hyperplanes.

Second, one should experiment with more complex and more efficient optimization algorithms on flag manifolds such as the ones developed in Appendix (IRLS in Appendix~\ref{app:RSR} and Newton in Appendix~\ref{app:TR}) or the ones proposed in~\citet{ye_optimization_2022}, \citet{nguyen_closed-form_2022} and \citet{zhu_practical_2024} and develop new ones that are specifically adapted to the properties of the objective function. We invite the interested researchers from the manifold optimization community to test their existing algorithms (and develop new ones) on our statistically-grounded optimization problems \eqref{eq:RSR_Fl}, \eqref{eq:TR_Fl}, \eqref{eq:SSC_Fl}, \eqref{eq:DIP_Fl} on flag manifolds.

Third, we derived a very general principle for transitioning from fixed-dimension subspace methods to increasing-dimension flag methods, but this principle could be revisited specifically for each problem. This includes the problems that are not specifically formulated as a Grassmannian optimization, as long as they somewhat involve low-dimensional subspaces, like in domain adaptation~\citep{gopalan_domain_2011, boqing_gong_geodesic_2012} and sparse subspace clustering~\citep{elhamifar_sparse_2013}.

Fourth, we leave the question of the choice of flag signature (i.e. the sequence of dimensions to try) open in this paper (see, notably, Remark~\ref{rk:choice_signature}).
A first step for PCA in the Gaussian case has been achieved with the principal subspace analysis of~\citet{szwagier_curse_2025}, where the dimensions are chosen based on the relative eigengaps of the covariance matrix. Other generative models could be designed specifically for each subspace learning problem, and the {recent} penalty of~\citet{szwagier_eigengap_2025} {on the spectral gaps} could be {added} to automatically select the flag signature.

\acks{The authors would like to thank Pierre-Antoine Absil for his insightful comments on the flag trick and its applications. This work was supported by the ERC grant \#786854 G-Statistics from the European Research Council under the European Union’s Horizon 2020 research and innovation program and by the French government through the 3IA Côte d’Azur Investments ANR-23-IACL-0001 managed by the National Research Agency.}

\newpage

\appendix
\section{Robust Subspace Recovery: Extensions and Proofs}\label{app:RSR}

\subsection{An IRLS Algorithm for Robust Subspace Recovery}
Iteratively reweighted least squares (IRLS) is a ubiquitous method to solve optimization problems involving $\ell^p$-norms. Motivated by the computation of the geometric median~\citep{weiszfeld_sur_1937}, IRLS is highly used to find robust maximum likelihood estimates of non-Gaussian probabilistic models (typically those containing outliers) and finds application in robust regression~\citep{huber_robust_1964}, sparse recovery~\citep{daubechies_iteratively_2010}, etc.

The recent fast median subspace (FMS) algorithm~\citep{lerman_fast_2018}, achieving state-of-the-art results in RSR uses an IRLS scheme to optimize the absolute deviation~\eqref{eq:RSR_Gr}.
The idea is to first rewrite the absolute deviation as 
\begin{equation}
	\sum_{i=1}^n \norm{x_i - \Pi_{\S} x_i}_2 = \sum_{i=1}^n w_i(\S) \norm{x_i - \Pi_{\S} x_i}_2^2,
\end{equation}
with $w_i(\S) = \frac{1}{\norm{x_i - \Pi_{\S} x_i}_2}$, and then successively compute the weights $w_i$ and update the subspace according to the weighted objective.
More precisely, the FMS algorithm creates a sequence of subspaces $\S^1, \dots, \S^m$ such that 
\begin{equation}\label{eq:IRLS_FMF}
    \S^{t+1} = \argmin_{\S \in \Gr(p, q)} \sum_{i=1}^n w_i(\S^t) \norm{x_i - \Pi_\S x_i}_2^2.
\end{equation}
This weighted least-squares problem enjoys a closed-form solution which relates to the eigenvalue decomposition of the weighted covariance matrix $\sum_{i=1}^n w_i(\S^t) x_i {x_i}\T$~\citep[Chapter~3.3]{vidal_generalized_2016}.

We wish to derive an IRLS algorithm for the flag-tricked version of the LAD problem~\eqref{eq:RSR_Fl}.
In order to stay close in mind to the recent work of \citet{peng_convergence_2023} who proved convergence of a general class of IRLS algorithms under some mild assumptions, we first rewrite~\eqref{eq:RSR_Fl} as
\begin{equation}~\label{eq:RSR_Fl_IRLS}
    \argmin_{\S_{1:d} \in\Fl(p, \qf)} \sum_{i=1}^n \rho(r(\S_{1:d}, x_i)),
\end{equation}
where $r(\S_{1:d}, x) = \norm{x - \Pi_{\Sf} x}_2$ is the \textit{residual} and $\rho(r) = |r|$ is the \textit{outlier-robust} loss function.
Following~\citet{peng_convergence_2023}, the IRLS scheme associated with~\eqref{eq:RSR_Fl_IRLS} is:
\begin{equation}
\begin{cases}
w_i^{t+1} = \rho'(r(\S_{1:d}^t, x_i)) /  r(\S_{1:d}^t, x_i) = 1 / \norm{x_i - \Pi_{\Sf} x_i}_2,\\
(\S_{1:d})^{t+1} = \argmin_{\S_{1:d} \in\Fl(p, \qf)} \sum_{i=1}^n w_i^{t+1} \norm{x_i - \Pi_{\Sf} x_i}_2^2.
\end{cases}
\end{equation}
We now show that the second step enjoys a closed-form solution.
\begin{proposition}[{Reweighted Nested PCA}]
The RLS problem
\begin{equation}
    \argmin_{\S_{1:d} \in\Fl(p, \qf)} \sum_{i=1}^n w_i \norm{x_i - \Pi_{\Sf} x_i}_2^2
\end{equation}
has a closed-form solution $\S_{1:d}^* \in\Fl(p, \qf)$, which is given by the eigenvalue decomposition of the weighted sample covariance matrix $S_w = \sum_{i=1}^n w_i x_i {x_i}\T = \sum_{j=1}^p \ell_j v_j {v_j}\T$, i.e.
\begin{equation}
    \S_k^* = \operatorname{Span}(v_1, \dots, v_{q_k}) \quad (k=1\twodots d).
\end{equation}
\end{proposition}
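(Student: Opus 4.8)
The plan is to mirror almost verbatim the proof of Theorem~\ref{thm:flag_trick}, replacing the sample covariance $S$ by the weighted covariance $S_w$. First I would expand the weighted reconstruction error using the symmetry of the average multilevel projector $\Pi_{\Sf} = \frac 1 d \sum_{k=1}^d \Pi_{\S_k}$. Since each term satisfies $\norm{x_i - \Pi_{\Sf} x_i}_2^2 = {x_i}\T (I_p - \Pi_{\Sf})^2 x_i$, summing against the weights and pulling the finite sum inside the trace yields
\begin{equation}
    \sum_{i=1}^n w_i \norm{x_i - \Pi_{\Sf} x_i}_2^2 = \tr{(I_p - \Pi_{\Sf})^2 S_w}, \qquad S_w \coloneqq \sum_{i=1}^n w_i x_i {x_i}\T.
\end{equation}
Writing $I_p - \Pi_{\Sf} = \frac 1 d \sum_{k=1}^d (I_p - \Pi_{\S_k}) = \frac 1 d W$ exactly as in Theorem~\ref{thm:flag_trick}, the objective reduces to $\frac 1 {d^2}\tr{W^2 S_w}$.

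Next I would introduce an orthogonal representative $U_{1:d+1} \in \O(p)$ of the flag and use $\Pi_{\S_k} = U_{1:d+1}\diag{I_{q_k}, 0_{p-q_k}}{U_{1:d+1}}\T$ to diagonalize $W = U_{1:d+1}\Lambda {U_{1:d+1}}\T$, with the same ascending step-diagonal $\Lambda = \diag{0\,I_{q_1}, 1\,I_{q_2 - q_1}, \dots, d\,I_{q_{d+1}-q_d}}$. This recasts the reweighted least-squares problem as
\begin{equation}
    \argmin_{U \in \O(p)} \frac 1 {d^2}\tr{U \Lambda^2 U\T S_w},
\end{equation}
which is structurally identical to the problem solved in Theorem~\ref{thm:flag_trick}, the only difference being that $S$ has been replaced by $S_w$.

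Finally I would invoke the same external result \citep[Theorem~B.1]{szwagier_curse_2025}, now applied to $S_w$: because $\Lambda^2$ has a weakly increasing diagonal and $S_w$ is symmetric, the von-Neumann-type trace alignment forces the leading eigenvectors of $S_w$ (those associated with the largest eigenvalues) to be matched with the smallest diagonal blocks of $\Lambda^2$, i.e. to populate $\S_1 \subset \S_2 \subset \dots \subset \S_d$ in order. This gives precisely $\S_k^* = \operatorname{Span}(v_1, \dots, v_{q_k})$.

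I do not expect any serious obstacle: the entire argument is a drop-in substitution of $S_w$ for $S$ in the proof of Theorem~\ref{thm:flag_trick}. The only point requiring an explicit line of justification is that $S_w$ remains a bona fide symmetric positive semi-definite matrix, so that the cited theorem applies; this holds because the IRLS weights $w_i = 1/\norm{x_i - \Pi_{\Sf} x_i}_2$ are nonnegative, whence $S_w$ is a nonnegative combination of rank-one terms $x_i {x_i}\T$. If one additionally wants uniqueness of the solution, it suffices to state the eigengap condition $\ell_{q_k} \neq \ell_{q_k+1}$ on the spectrum of $S_w$ for all $k$, exactly parallel to Theorem~\ref{thm:flag_trick}.
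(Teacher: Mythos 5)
Your proposal is correct and follows essentially the same route as the paper: the paper's proof likewise rewrites the weighted objective as $\tr{(I_p - \Pi_{\Sf})^2 S_w}$ and then observes that one is in exactly the situation of Theorem~\ref{thm:flag_trick} with $X X\T$ replaced by the weighted covariance, concluding via the same external result of \citet{szwagier_curse_2025}. Your version merely spells out the intermediate diagonalization steps (the matrix $W$, the orthogonal representative, and $\Lambda$) that the paper leaves implicit by reference to the proof of Theorem~\ref{thm:flag_trick}, together with the harmless remark that $S_w$ is symmetric positive semi-definite.
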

\begin{proof}
One has
\begin{equation}
	\sum_{i=1}^n w_i \norm{x_i - \Pi_{\Sf} x_i}_2^2 = \tr{(I - \Pi_{\Sf})^2 \lrp{\sum_{i=1}^n w_i x_i {x_i}\T}}.
\end{equation}
Therefore, we are exactly in the same case as in Theorem~\ref{thm:flag_trick}, if we replace $X X\T$ with the reweighted covariance matrix $\sum_{i=1}^n w_i x_i {x_i}\T$. This does not change the result, so we conclude with the end of the proof of Theorem~\ref{thm:flag_trick} (itself relying on~\citet{szwagier_curse_2025}).
\end{proof}
Hence, one gets an IRLS scheme for the LAD problem. 
One can modify the robust loss function $\rho(r) = |r|$ by a Huber-like loss function to avoid weight explosion. Indeed, one can show that the weight $w_i \coloneqq 1 / \norm{x_i - \Pi_{\Sf} x_i}_2$ goes to infinity when the first subspace $\S_1$ of the flag gets close to $x_i$ .
Therefore in practice, we take 
\begin{equation}
    \rho(r) = 
        \begin{cases}
            r^2 / (2 p \delta) & \text{if } |r| <= p\delta,\\
            r - p \delta / 2 & \text{if } |r| > p\delta.
        \end{cases}
\end{equation}
This yields
\begin{equation}
    w_i = 1 / \max\lrp{p\delta,  1 / \norm{x_i - \Pi_{\Sf} x_i}_2}.
\end{equation}
The final proposed scheme is given in Algorithm~\ref{alg:FMF}, named \textit{fast median flag} (FMF), in reference to the fast median subspace algorithm of~\citet{lerman_fast_2018}.
\begin{algorithm}
\caption{Fast Median Flag}\label{alg:FMF}
\begin{algorithmic}
\Require $X\in \R^{p\times n}$ (data); $q_{1:d} \coloneqq (q_1, \dots, q_{d})$ (signature); $t_{max}$ (max number of iterations); $\eta$ (convergence threshold); $\varepsilon$ (Huber-like saturation parameter) 
\Ensure
$U \in \St(p, q)$
\State $t \gets 0, \quad \Delta \gets \infty, \quad U^0 \gets \operatorname{SVD}(X, q)$
\While{$\Delta > \eta$ and $t < t_{max}$}
    \State $t \gets t+1$
    \State $r_i \gets \norm{x_i - \Pi_{\Sf} x_i}_2$
    \State $y_i \gets {x_i} / {\max(\sqrt{r_i}, \varepsilon)}$
    \State $U^t \gets \operatorname{SVD(Y, q)}$
    \State $\Delta \gets \sqrt{\sum_{k=1}^{d} \Theta(U^t_{q_k}, U^{t-1}_{q_k})^2}$
\EndWhile
\end{algorithmic}
\end{algorithm}
We can easily check that FMF is a direct generalization of FMS for Grassmannians (i.e. when $d=1$).

\begin{remark}[{Convergence Analysis}]
This is far beyond the scope of the paper, but we believe that the convergence result of~\citet[Theorem~1]{peng_convergence_2023} could be generalized to the FMF algorithm, due to the compactness of flag manifolds and the expression of the residual function $r$.
\end{remark}

\subsection{Proof of Proposition~\ref{prop:RSR}}
Let $\Sf \in \Fl(p, \qf)$ and $U_{1:d+1} \coloneqq \bmat{U_1 & U_2 & \cdots & U_d & U_{d+1}} \in \O(p)$ be an orthogonal representative of $\Sf$ (cf. Section~\ref{sec:flags}). One has:
\begin{align}
	\norm{x_i - \Pi_{\S_{1:d}} x_i}_2 &= \sqrt{{(x_i - \Pi_{\S_{1:d}} x_i)}\T (x_i - \Pi_{\S_{1:d}} x_i)},\\
	 &= \sqrt{{x_i}\T {(I_p - \Pi_{\S_{1:d}})}^2 x_i},\\
	 &= \sqrt{{x_i}\T {\lrp{I_p - \frac1d \sum_{k=1}^d\Pi_{\S_k}}}^2 x_i},\\
 	 &= \sqrt{\frac1{d^2} {x_i}\T {\lrp{\sum_{k=1}^d (I_p - \Pi_{\S_k})}}^2 x_i}.
\end{align}
By definition, one has $\Pi_{\S_k} = \sum_{l=1}^{k} U_l {U_l}\T$ and, in particular, $I_p = \sum_{l=1}^{d+1} U_l {U_l}\T$. Therefore, by re-injection and double summation inversion, one gets:
\begin{multline}
    \sum_{k=1}^{d}(I_p - \Pi_{\mathcal S_k}) = \sum_{k=1}^{d}\left(\sum_{l=1}^{d+1} U_l U_l^\top - \sum_{l=1}^k U_l U_l^\top\right) = \sum_{k=1}^{d}\sum_{l=k+1}^{d+1} U_l U_l^\top = \sum_{l=2}^{d+1}\sum_{k=1}^{l-1} U_l U_l^\top =\\ \sum_{l=2}^{d+1}(l-1) \, U_l U_l^\top = \sum_{l=1}^{d+1}(l-1) \, U_l U_l^\top.
\end{multline}
Re-injecting the previous quantity in the computation of $\norm{x_i - \Pi_{\S_{1:d}} x_i}_2$, one gets:
\begin{align}  
 	\norm{x_i - \Pi_{\S_{1:d}} x_i}_2 &= \sqrt{\frac1{d^2} {x_i}\T \lrp{\sum_{k=1}^{d+1} (k-1) U_k {U_k}\T}^2  x_i},\\
 	&= \sqrt{\frac1{d^2} {x_i}\T \lrp{\sum_{k=1}^{d+1} (k-1)^2 U_k {U_k}\T}  x_i},\\
 	&= \sqrt{\sum_{k=1}^{d+1} \lrp{\frac {k-1} {d}}^2 {x_i}\T U_k {U_k}\T  x_i},\\
  	\norm{x_i - \Pi_{\S_{1:d}} x_i}_2 &= \sqrt{\sum_{k=1}^{d+1} \lrp{\frac {k-1} {d}}^2 \norm{{U_k}\T x_i}_2^2}.
\end{align}
Hence, the flag-tricked robust subspace recovery problem~\eqref{eq:RSR_Fl} is equivalent to the following problem on the orthogonal group:
\begin{equation}\label{eq:RSR_Fl_O}
	\argmin_{U_{1:d+1} \in \O(p)} \sum_{i=1}^n \sqrt{\sum_{k=1}^{d+1} \lrp{\frac {k-1} {d}}^2 \norm{{U_k}\T x_i}_2^2}.
\end{equation}
This problem can be further simplified into an optimization problem on Stiefel manifolds, by noting that:
\begin{align}
    \norm{x_i - \Pi_{\S_{1:d}} x_i}_2 &= \sqrt{\sum_{k=1}^{d} \lrp{\lrp{\frac {k-1} {d}}^2 \norm{{U_k}\T x_i}_2^2} + \lrp{\frac {d+1-1} {d}}^2 \norm{{U_{d+1}}\T x_i}_2^2},\\
    &= \sqrt{\sum_{k=1}^{d} \lrp{\lrp{\frac {k-1} {d}}^2 \norm{{U_k}\T x_i}_2^2} + {x_i}\T {U_{d+1}} {U_{d+1}}\T {x_i}},\\
    &= \sqrt{\sum_{k=1}^{d} \lrp{\lrp{\frac {k-1} {d}}^2 \norm{{U_k}\T x_i}_2^2} + {x_i}\T (I_p - {U_{1:d}} {U_{1:d}}\T) {x_i}},\\
    &= \sqrt{\sum_{k=1}^{d} \lrp{\lrp{\frac {k-1} {d}}^2 \norm{{U_k}\T x_i}_2^2} + \norm{x_i}_2^2 - \norm{{U_{1:d}}\T x_i}_2^2},
\end{align}
which concludes the proof.

\section{Trace Ratio Problems: Extensions and Proofs}\label{app:TR}
In this section, we successively develop a Newton-Lanczos algorithm for the multilevel trace ratio problem~\eqref{eq:TR_Fl}, a kernelization of the same problem to handle nonlinear data, and we provide the proof of Proposition~\ref{prop:TR}.

\subsection{A Newton Method for Multilevel Trace Ratio Problems}
We wish to solve the following optimization problem (cf. \eqref{eq:TR_Fl}):
\begin{equation}\label{eq:TR_Fl_app}
	\argmax_{\S_{1:d} \in \Fl(p, q_{1:d})} \frac{\tr{\Pi_{\S_{1:d}} A}}{\tr{\Pi_{\S_{1:d}} B}}.
\end{equation}
In this subsection, we propose a Newton method---drawn from~\citet{ngo_trace_2012}, which is itself based on \citet{wang_trace_2007} and \citet{jia_trace_2009}---to solve this problem. The following proposition states that the multilevel trace ratio problem~\eqref{eq:TR_Fl_app} can be reformulated as the search for the root (or zero) of a certain function, which can be then classically solved via Newton's method.

\begin{proposition}[Multilevel Trace Ratio as a Newton Method]\label{prop:Newton}
    Let $M$ be a symmetric matrix, $(\ell_j(M), v_j(M))$ its $j$-th largest (eigenvalue, eigenvector)-pair and $\S_k^*(M) = \operatorname{Span}(v_1(M), \dots, v_{q_k}(M))$. 
    Let $f(\rho) \coloneqq \sum_{k=1}^d \sum_{j=1}^{q_k} \ell_j(A-\rho B)$. Then $f$ admits a unique root $\rho^*$, and one has:
    \begin{equation}
        \argmax_{\S_{1:d} \in \Fl(p, q_{1:d})} \frac{\tr{\Pi_{\S_{1:d}} A}}{\tr{\Pi_{\S_{1:d}} B}} = (\S_1^*(A-\rho^* B), \dots, \S_d^*(A-\rho^* B)) 
    \end{equation}
\end{proposition}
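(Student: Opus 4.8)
The plan is to reduce the ratio-maximization to a family of linear (trace) maximizations parametrized by $\rho$, exactly as in the classical Newton--Lanczos treatment of trace ratios, and then to exploit the explicit form of the average multilevel projector. First I would record that, since $\operatorname{rank}(B) > p - q$, the kernel of $B$ has dimension $p - \operatorname{rank}(B) < q = q_d$, so no $q_d$-dimensional subspace is contained in $\ker B$; consequently $\tr{\Pi_{\Sf} B} \geq \frac1d \tr{\Pi_{\S_d} B} > 0$ for every flag, the objective is well defined, and the sign-preserving equivalences below are licit. I then introduce the auxiliary function $\phi(\rho) \coloneqq \max_{\S_{1:d} \in \Fl(p, q_{1:d})} \tr{\Pi_{\S_{1:d}}(A - \rho B)}$ and establish the standard reduction: letting $\rho^*$ be the optimal ratio (attained by compactness of the flag manifold), positivity of the denominator gives, for every flag, $\tr{\Pi_{\S_{1:d}}A}/\tr{\Pi_{\S_{1:d}}B} \leq \rho^* \iff \tr{\Pi_{\S_{1:d}}(A - \rho^* B)} \leq 0$, so that $\phi(\rho^*) = 0$ and the maximizers of $\phi(\rho^*)$ are precisely the ratio-optimal flags.

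Next I would compute $\phi$ in closed form. Writing $\Pi_{\S_{1:d}} = \frac1d\sum_{k=1}^d \Pi_{\S_k}$ gives $\tr{\Pi_{\S_{1:d}} M} = \frac1d \sum_{k=1}^d \tr{\Pi_{\S_k} M}$ for any symmetric $M$; by the Ky Fan maximum principle each summand is bounded by $\sum_{j=1}^{q_k}\ell_j(M)$, with equality exactly when $\S_k = \operatorname{Span}(v_1(M), \dots, v_{q_k}(M))$. The crucial point is that these per-level optima are \emph{simultaneously} attainable because they are nested: the single flag of leading eigenspaces of $M$ saturates every Ky Fan bound at once, which is precisely the nested-eigenspace mechanism already used in Theorem~\ref{thm:flag_trick}. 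Hence $\phi(\rho) = \frac1d \sum_{k=1}^d \sum_{j=1}^{q_k} \ell_j(A - \rho B) = \frac1d f(\rho)$, with maximizer $(\S_1^*(A - \rho B), \dots, \S_d^*(A - \rho B))$. In particular $\phi(\rho^*) = 0 \iff f(\rho^*) = 0$, so the root of $f$ coincides with the optimal ratio, and evaluating the maximizer at that root yields the claimed formula.

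It then remains to show $f$ has a unique root. I would argue that $f$ is continuous and strictly decreasing. Each level term $g_k(\rho) \coloneqq \sum_{j=1}^{q_k}\ell_j(A-\rho B) = \max_{\dim\mathcal{V}=q_k}\tr{\Pi_{\mathcal{V}}(A-\rho B)}$ is a maximum of affine functions of $\rho$, hence convex, and at points of differentiability its slope is $-\tr{\Pi_{\mathcal{V}^*(\rho)} B} \leq 0$ by the envelope (Hellmann--Feynman) argument. The level $k = d$ contributes a strictly negative slope because $\dim \mathcal{V}^*(\rho) = q > \dim\ker B$ forces $\tr{\Pi_{\mathcal{V}^*(\rho)} B} > 0$; together with the convex, piecewise-affine structure this makes $g_d$ strictly decreasing, whence $f = \sum_k g_k$ is strictly decreasing. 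Combined with $f(\rho) \to +\infty$ as $\rho \to -\infty$ and $f(\rho) \to -\infty$ as $\rho \to +\infty$ — which follows by tracking the top $q_d$ eigenvalues of $A - \rho B$ and using $q_d = q > p - \operatorname{rank}(B)$ so that the $k=d$ term carries a term of order $\rho$ times a quantity bounded below by $\min_{\dim\mathcal{V}=q}\tr{\Pi_{\mathcal{V}}B} > 0$ — the intermediate value theorem yields a unique root $\rho^*$.

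The hard part will be the strict-monotonicity and limiting-behaviour bookkeeping, since the naive eigenvalue-derivative formula is only valid away from eigenvalue crossings where the $q_k$-th gap closes. I would sidestep this by leaning on convexity of the sum-of-top-eigenvalues function $g_k$ rather than on pointwise differentiability: monotonicity on the open intervals between the finitely many crossings extends to the crossings by continuity, so no genuine regularity is lost. Finally, uniqueness of the \emph{optimal flag} — as opposed to mere equality of the sets of optima — requires the eigengap condition $\ell_{q_k}(A-\rho^* B) \neq \ell_{q_k+1}(A-\rho^* B)$ for each $k$, exactly paralleling the uniqueness clause of Theorem~\ref{thm:flag_trick}, and I would invoke that theorem's argument to conclude.
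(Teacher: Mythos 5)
Your proof is correct and follows essentially the same route as the paper's: compactness of the flag manifold gives the optimal ratio $\rho^*$, positivity of the denominator converts ratio-optimality into the linear condition $\tr{\Pi_{\S_{1:d}}(A-\rho^* B)}\leq 0$, the flag of nested leading eigenspaces solves that linear problem (your explicit Ky Fan plus simultaneous-saturation-by-nestedness argument is exactly the mechanism the paper imports from Theorem~\ref{thm:flag_trick}), and strict monotonicity of $f$ yields uniqueness of the root. If anything, your convexity/envelope treatment of the monotonicity step is more careful than the paper's, which asserts the derivative formula $f'(\rho) = -\sum_{k=1}^d \tr{\Pi_{\S_k^*(A-\rho B)} B}$ ``everywhere'' without addressing non-differentiability at eigenvalue crossings (your side remark that the sum of top eigenvalues is piecewise affine in $\rho$ is inaccurate---it is only piecewise analytic---but your argument never relies on it, only on convexity).
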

\begin{proof}
    Let $\phi\colon \Fl(p, q_{1:d}) \ni \Sf \mapsto {\tr{\sum_{k=1}^d \Pi_{\S_k} A}}/{\tr{\sum_{k=1}^d \Pi_{\S_k} B}}$ be the objective function we want to maximize. We can see that $\phi$ is well defined on $\Fl(p, q_{1:d})$ if and only if $\operatorname{rank}(B) > p - q_d$, which we assume in the following. 
    
    $\phi$ is continuous on $\Fl(p, \qf)$ which is a smooth compact manifold, therefore it admits a maximum $\rho^*$. As a consequence, for any ${\S_{1:d}} \in \Fl(p, \qf)$, one has $\phi(\Sf) \leq \rho^*$, which is equivalent to $~{\psi(\Sf)\coloneqq\sum_{k=1}^d \tr{\Pi_{\S_k} (A - \rho^* B)} \leq 0}$. Similarly to Theorem~\ref{thm:flag_trick}, one can show that the maximum of $\psi$, which is $0$, is attained in $\Sf^* \coloneqq \S_1^*(A-\rho^* B), \dots, \S_d^*(A-\rho^* B)$. Therefore, one has $\psi(\Sf^*) = \sum_{k=1}^d \operatorname{tr}(\Pi_{\S_k^*(A - \rho^* B)} (A - \rho^* B)) = f(\rho^*) = 0$.

    Let us now show that $\rho^*$ is the unique root of $f$. One can show that the derivative of $\sum_{j=1}^{q_k} \ell_j(A - \rho B)$ with respect to $\rho$ is $- \operatorname{tr}(\Pi_{\S_k^*(A - \rho B)} B)$~\citep{ngo_trace_2012}. Therefore, the derivative of $f$ is $f'(\rho) = - \sum_{k=1}^d \operatorname{tr}(\Pi_{\S_k^*(A - \rho B)} B)$, which is everywhere (strictly) negative since $B$ is positive semidefinite and $\operatorname{rank}(B) > p - q_d$. Therefore, f is a (strictly) decreasing function. Hence, $\rho^*$ is the unique root of $f$.
\end{proof}
Newton's method to find the root $\rho^*$ of $f$ is described by the following iteration: 
\begin{equation}
    \rho^{t+1} = \rho^t - \frac{f(\rho^t)}{f'(\rho^t)} = \frac{\sum_{k=1}^d \tr{\Pi_{\S_k(A - \rho^t B)} A}}{\sum_{k=1}^d \tr{\Pi_{\S_k(A - \rho^t B)} B}}. 
\end{equation}
The optimal flag $\Sf^*$ is given by the eigenvalue decomposition of $A - \rho^\infty B$, according to Proposition~\ref{prop:Newton}. The only remaining question is the one of initialization. The following proposition provides a possible answer.
\begin{proposition}[Bounds on Newton's Root]\label{prop:Newton_bounds}
    Let $\ell_j(A, B)$ denote the $j$-th largest eigenvalue of the matrix pencil $(A, B)$. If $B \succ 0$, then $\rho^* \in [\ell_{q_d}(A, B), \ell_{1}(A, B)]$.
\end{proposition}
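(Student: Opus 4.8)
The plan is to evaluate the function $f(\rho) = \sum_{k=1}^d \sum_{j=1}^{q_k} \ell_j(A - \rho B)$ at the two proposed endpoints and then invoke monotonicity. Proposition~\ref{prop:Newton} already guarantees that $f$ is strictly decreasing and has a unique root $\rho^*$, so it suffices to show the bracketing inequalities $f(\ell_1(A,B)) \le 0$ and $f(\ell_{q_d}(A,B)) \ge 0$. Since $\ell_{q_d}(A,B) \le \ell_1(A,B)$, these two facts force the unique root $\rho^*$ to lie in $[\ell_{q_d}(A,B), \ell_1(A,B)]$.

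The key tool is Sylvester's law of inertia. As $B \succ 0$, I would factor $A - \rho B = B^{1/2}(C - \rho I_p) B^{1/2}$ with $C \coloneqq B^{-1/2} A B^{-1/2}$, a congruence transformation. Hence $A - \rho B$ has the same inertia (numbers of positive, negative, and zero eigenvalues) as $C - \rho I_p$. The eigenvalues of $C$ are precisely the generalized eigenvalues $\ell_1(A,B) \ge \dots \ge \ell_p(A,B)$ of the pencil $(A,B)$, so $C - \rho I_p$ has eigenvalues $\ell_j(A,B) - \rho$, whose signs I can read off directly.

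First I would set $\rho = \ell_1(A,B)$: every eigenvalue $\ell_j(A,B) - \ell_1(A,B)$ is $\le 0$, so $C - \ell_1(A,B) I_p \preceq 0$ and, by congruence, $A - \ell_1(A,B) B \preceq 0$. Thus all its eigenvalues are nonpositive and every summand of $f(\ell_1(A,B))$ is $\le 0$, giving $f(\ell_1(A,B)) \le 0$. Next I would set $\rho = \ell_{q_d}(A,B)$: now $\ell_j(A,B) - \ell_{q_d}(A,B) \ge 0$ for $j \le q_d$, so by inertia $A - \ell_{q_d}(A,B) B$ has at least $q_d$ nonnegative eigenvalues. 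Because eigenvalues are listed in decreasing order, its top $q_d$ eigenvalues are therefore all nonnegative, i.e.\ $\ell_j(A - \ell_{q_d}(A,B) B) \ge 0$ for $j = 1, \dots, q_d$. Since each $q_k \le q_d$, every inner sum $\sum_{j=1}^{q_k} \ell_j(A - \ell_{q_d}(A,B) B)$ is a sum of nonnegative terms, whence $f(\ell_{q_d}(A,B)) \ge 0$.

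The step I expect to require the most care is passing from ``at least $q_d$ eigenvalues are nonnegative'' to ``the top $q_d$ eigenvalues are nonnegative,'' which relies on the decreasing-ordering convention and is robust to ties at $\ell_{q_d}(A,B)$ (the corresponding zero eigenvalues of $A - \rho B$ remain $\ge 0$). The other essential ingredient is that the reduction to checking only two endpoints is legitimate solely because of the strict monotonicity of $f$ established in Proposition~\ref{prop:Newton}; without it, the two sign inequalities would merely bracket \emph{a} root rather than pin down \emph{the} unique one.
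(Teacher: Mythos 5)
Your proof is correct and takes essentially the same route as the paper's: both apply Sylvester's law of inertia to a congruence reducing the pencil $(A,B)$ to a diagonal pencil (you via $B^{\pm 1/2}$, the paper via a simultaneous diagonalizing $Z \in \operatorname{GL}_p$ with $Z\T B Z = I_p$), then evaluate $f$ at the two endpoints to get $f(\ell_1(A,B)) \le 0$ and $f(\ell_{q_d}(A,B)) \ge 0$, concluding via the monotonicity of $f$ established in Proposition~\ref{prop:Newton}. Your explicit handling of the ordering convention and of ties at $\ell_{q_d}(A,B)$ is a minor clarification of a step the paper leaves implicit.
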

\begin{proof}
    Let $Z\in \operatorname{GL}_p$ such that $Z\T B Z = I_p$ and $Z\T A Z =\diag{\ell_{1}(A, B), \dots, \ell_{p}(A, B)}$. Then, using Sylvester's law of inertia, one gets that the number of negative eigenvalues of $Z\T (A - \rho B) Z$ and of $\Lambda - \rho I_p$ are the same. Therefore, for $\rho = \ell_{1}(A, B)$, all the eigenvalues of $A - \rho B$ are nonpositive, so $f(\rho) = \sum_{k=1}^d \sum_{j=1}^{q_k} \ell_j(A - \rho B) \leq 0$. Similarly, for $\rho = \ell_{q_d}(A, B)$, the $q_d$ largest eigenvalues of $A - \rho B$ are nonnegative, so in particular for all $k \in \{1, \dots, d\}$, the $q_k$ largest eigenvalues of $A - \rho B$ are nonnegative, so in the end $f(\rho) \geq 0$.
\end{proof}
The bounds given by Proposition~\ref{prop:Newton_bounds} can be used to initialize Newton's method. Alternatively, one can initialize the algorithm with a random flag (see Remark~\ref{rem:init_gd}), similarly to~\citet[Algorithm~4.1]{ngo_trace_2012}, in order to circumvent the potentially costly computation of the generalized eigenvalues.
The final proposed scheme is given in Algorithm~\ref{alg:IATR}, called \textit{flag iterative trace ratio} in reference to the iterative trace ratio of~\citet{wang_trace_2007}.
\begin{algorithm}
\caption{Flag Iterative Trace Ratio}\label{alg:IATR}
\begin{algorithmic}
\Require $A, B \in \Sym_+(p)$; $q_{1:d} \coloneqq (q_1, \dots, q_{d})$ (signature); $\eta$ (convergence threshold)
\Ensure
$U \in \St(p, q)$ 
\State $t \gets 0, \Delta \gets \infty, \rho^0 \in \lrb{\ell_{q}(A, B), \ell_{1}(A, B)}$
\While{$\Delta > \eta$}
	\State $\bmat{u_1^t & \cdots & u_q^t} \gets \argmax_{U\in\St(p, q)} U\T (A - \rho^t B) U$ \Comment{compute leading $q$ eigenvectors}
    \State $\rho^t \gets \frac{\sum_{k=1}^d \tr{\bmat{u_1^t & \cdots & u_{q_k}^t}\T A \bmat{u_1^t & \cdots & u_{q_k}^t}}}{\sum_{k=1}^d \tr{\bmat{u_1^t & \cdots & u_{q_k}^t}\T B \bmat{u_1^t & \cdots & u_{q_k}^t}}}$ \Comment{update trace ratio}
    \State $\Delta \gets \sqrt{\sum_{k=1}^{d} \Theta(\bmat{u_1^t & \cdots & u_{q_k}^t}, \bmat{u_1^{t-1} & \cdots & u_{q_k}^{t-1}})^2}$ \Comment{evaluate convergence}
\EndWhile
\end{algorithmic}
\end{algorithm}

\subsection{Kernel Trick for Graph Embedding}\label{subsec:kernel}

The seminal paper of~\citet{yan_graph_2007} shows that many dimensionality reduction methods can be seen as a trace ratio problem known as \textit{graph embedding}. Let $X\coloneqq \bmat{x_1 & \dots & x_n}\in\R^{p\times n}$ be a $p$-dimensional dataset. We look for a $q$-dimensional linear embedding of $X$ that satisfies pairwise attraction and repulsion relationships. For instance, if we know the classes like in \textit{linear discriminant analysis}~\citep{fisher_use_1936}, we might want same-class points to be close and different-class points to be far in the lower-dimensional subspace. Writing, respectively, $S_{ii'}\in\R$ and $S^p_{ii'}\in\R$ the desired similarity and penalty between points $i$ and $i'$, this yields to the following optimization problem~\citep{yan_graph_2007,wang_trace_2007}:
\begin{equation}
    \argmin_{U\in\St(p, q)} \frac{\sum_{i \neq i'} \norm{U\T x_i - U\T x_{i'}}_2^2 S_{i{i'}}}{\sum_{i \neq {i'}} \norm{U\T x_i - U\T x_{i'}}_2^2 S^p_{i{i'}}}.
\end{equation}
Writing $L\in\Sym(n)$ and $L^p\in\Sym(n)$ the graph Laplacians associated with the adjacency matrices $S\in\Sym(n)$ and $S^p\in\Sym(n)$---i.e., $L = D - S$, with $D$ a diagonal matrix such that $D_{ii}=\sum_{{i'}\neq i} S_{i{i'}} \forall i$---this can be rewritten as the following trace ratio problem:
\begin{equation}
    \argmax_{U\in\St(p, q)} \frac{\tr{U\T X L^p X\T U}}{\tr{U\T X L X\T U}}.
\end{equation}
Finding a linear embedding satisfying such attraction and repulsion properties might be complicated in practice, as real data distributions are often nonlinear. A classical trick---known as the \textit{kernel trick}~\citep{hofmann_kernel_2008}---consists in nonlinearly mapping the data to an infinite-dimensional Hilbert space $(\mathcal{H}, \langle \cdot, \cdot\rangle_\mathcal{H})$---known as the reproducing kernel Hilbert space (RKHS)---via a so-called \textit{kernel} $\mathcal{K}\colon \R^p \times \R^p \to \R$, and then performing the linear graph embedding in the RKHS. This kernel must satisfy symmetry and positive-definiteness properties---a classical example is the radial basis function (RBF) kernel $\mathcal{K}(x, y) = e^{- {\|x - y\|}_2^2 / (2 \sigma^2)}$. The nonlinear map $\phi: \R^p \to \mathcal{H}$ is then chosen to be $\phi(x) = \mathcal{K}(x, \cdot)$, and the scalar product is $\langle \phi(x), \phi(y)\rangle_\mathcal{H} = \mathcal{K}(x, y)$.

Let us now write $K \coloneqq \mathcal{K}(x_i, x_j)_{i, j \in \{1, \dots, n\}} \in \Sym(n)$. Let $f = (f_1, \dots, f_q)\in\mathcal{H}^q$ be an orthonormal family, with $f_j = \sum_{i=1}^n v_{ij} \, \mathcal{K}(x_i, \cdot)$. The weights $V\in\R^{n\times q}$ must then satisfy the constraint $V\T K V = I_q$. Moreover, the coefficients of the orthogonal projection of $\phi(x_i)$ onto $\operatorname{Span}(f)$ are $\langle f_j, \phi(x_i)\rangle_\mathcal{H} = \sum_{i'=1}^n K_{ii'} v_{i'j}$.
Consequently, linear graph embedding in the RKHS boils down to:
\begin{equation}
    \argmin_{V\T K V = I_q} \frac{\sum_{i \neq i'} \norm{V\T K_i - V\T K_{i'}}_2^2 S_{ii'}}{\sum_{i \neq i'} \norm{V\T K_i - V\T K_{i'}}_2^2 S^p_{ii'}},
\end{equation}
which is equivalent to 
\begin{equation}
    \argmax_{V\T K V = I_q} \frac{\tr{V\T K L^p K V}}{\tr{V\T K L K V}}.
\end{equation}
Let $K \coloneqq Q \Lambda Q\T$ be an eigendecomposition of $K$, with $Q\in \O(n)$ and $\Lambda\in\operatorname{diag}(\R^n)$. 
Let us make the change of variable $U\coloneqq \Lambda^{\frac12} Q\T V$, and write $A = \Lambda^{1/2} Q\T L^p Q \Lambda^{1/2}$ and
$B = \Lambda^{1/2} Q\T L Q \Lambda^{1/2}$.
Then the trace ratio problem in the RKHS can be rewritten as:
\begin{equation}
    \argmax_{U\in\St(n, q)} \frac{\tr{U\T A U}}{\tr{U\T B U}}.
\end{equation}
One can now simply apply the flag trick to it, and get the following multilevel kernel graph embedding problem:
\begin{equation}
\argmax_{{\S_{1:d}} \in \Fl(n, q_{1:d})} \, \frac{\sum_{k=1}^{d} \tr{\Pi_{\S_k} A}}{\sum_{k=1}^{d} \tr{\Pi_{\S_k} B}}.
\end{equation}
Let $U^* \coloneqq \bmat{U^*_1 & \cdots & U^*_{d}}\in\St(n, q)$ be a Stiefel representative of the optimal flag $\S^*_{1:d} \in \Fl(n, q_{1:d})$. Then, one has for any $x\in \R^p$ the kernel graph embedding map $\Pi^*(x)_k = \sum_{i=1}^n U^*_{ik} K(x_i, x)$. In particular, $\Pi^*(X) = {U^*}\T K \in \R^{q \times n}$.

\subsection{Proof of Proposition~\ref{prop:TR}}
Let $\Sf \in \Fl(p, \qf)$ and $U_{1:d+1} \coloneqq \bmat{U_1 & U_2 & \cdots & U_d & U_{d+1}} \in \O(p)$ be an orthogonal representative of $\Sf$ (cf. Section~\ref{sec:flags}). One has:
\begin{align}
	\frac{\tr{\Pi_{\S_{1:d}} A}}{\tr{\Pi_{\S_{1:d}} B}} &= \frac{\tr{\frac1d \sum_{k=1}^d \Pi_{\S_k} A}}{\tr{\frac1d \sum_{l=1}^d \Pi_{\S_l} B}},\\
	&= \frac{\tr{\sum_{k=1}^{d+1} (d - (k-1)) U_k {U_k}\T A}}{\tr{\sum_{l=1}^{d+1} (d - (l-1)) U_l {U_l}\T B}},\\
	\frac{\tr{\Pi_{\S_{1:d}} A}}{\tr{\Pi_{\S_{1:d}} B}} &= \frac{\sum_{k=1}^{d+1} (d - (k-1)) \tr{U_k {U_k}\T A}}{\sum_{l=1}^{d+1} (d - (l-1)) \tr{U_l {U_l}\T B}},
\end{align}
which concludes the proof.

\section{Spectral Clustering: Extensions and Proofs}\label{app:SSC}

\subsection{Proof of Proposition~\ref{prop:SSC}}
Let $\Sf \in \Fl(n, \qf)$ and $U_{1:d+1} \coloneqq \bmat{U_1 & U_2 & \cdots & U_d & U_{d+1}} \in \O(p)$ be an orthogonal representative of $\Sf$ (cf. Section~\ref{sec:flags}). One has:
\begin{align}
	\langle \Pi_{\S_{1:d}}, L\rangle_F + \beta \norm{\Pi_{\S_{1:d}}}_1 &= \left\langle \frac1d\sum_{k=1}^d \Pi_{\S_k}, L\right\rangle_F + \beta \norm{\frac1d\sum_{k=1}^d \Pi_{\S_k}}_1,\\
	&= \frac1d \lrp{\left\langle \sum_{k=1}^{d+1} (d - (k-1)) U_k {U_k}\T, L\right\rangle_F + \beta \norm{\sum_{k=1}^{d+1} (d - (k-1)) U_k {U_k}\T }_1},\\
	&= \frac1d \lrp{\sum_{k=1}^{d+1} (d - (k-1)) \left\langle U_k {U_k}\T, L\right\rangle_F + \beta \norm{\sum_{k=1}^{d+1} (d - (k-1)) U_k {U_k}\T }_1},\\
	\langle \Pi_{\S_{1:d}}, L\rangle_F + \beta \norm{\Pi_{\S_{1:d}}}_1 &= \frac1d \lrp{\sum_{k=1}^{d+1} (d - (k-1)) \tr{{U_k}\T L U_k} + \beta \norm{\sum_{k=1}^{d+1} (d - (k-1)) U_k {U_k}\T }_1},
\end{align}
which concludes the proof.

\section{Empirical Running Times}\label{app:time}
In this section, we describe the empirical running times of the flag trick (and the subsequent optimization on flag manifolds via Algorithm~\ref{alg:GD}) for robust subspace recovery, trace ratio and sparse spectral clustering problems. We consider the three synthetic datasets described in Section~\ref{sec:examples}, with varying $n$ and $p$, and report the average running times over 10 independent experiments in Figure~\ref{fig:time}.
\begin{figure}
	\centering
    \includegraphics[width=\linewidth]{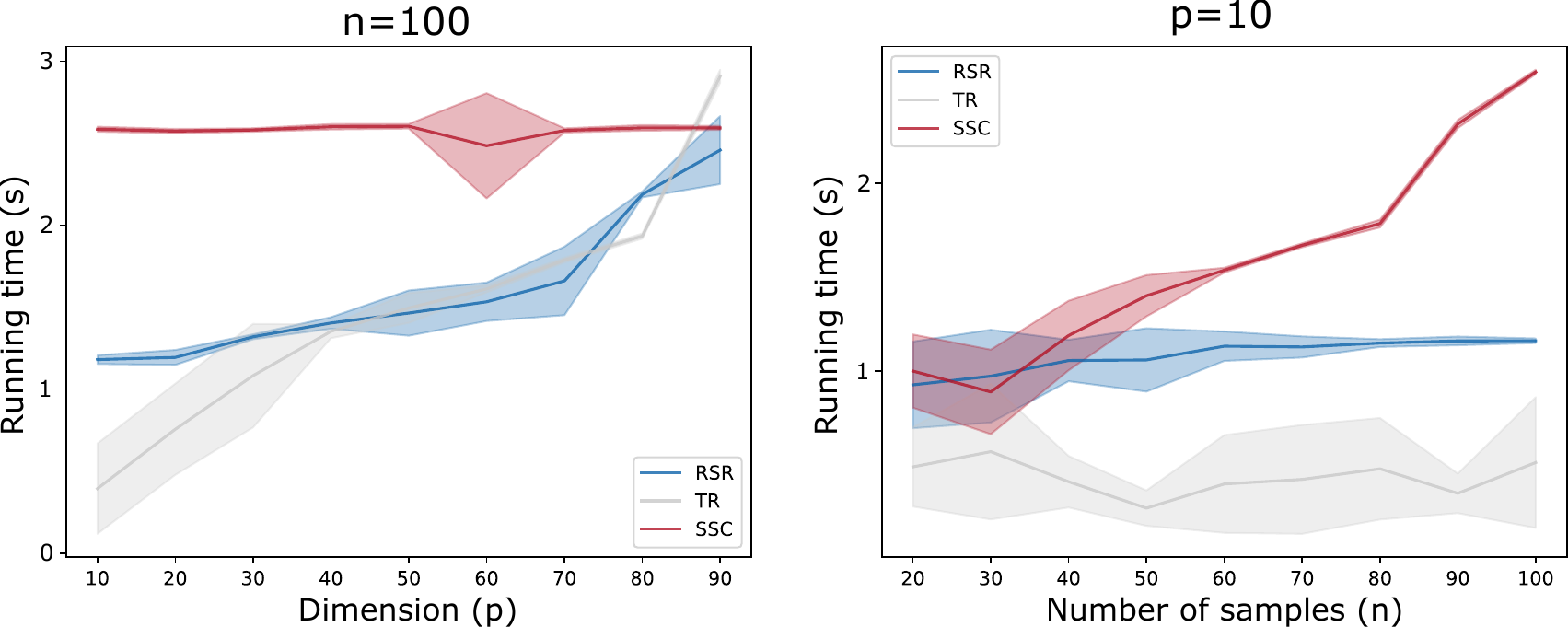}
    \caption{Empirical running times of the flag trick and the subsequent steepest descent on flag manifolds (Algorithm~\ref{alg:GD}) for three machine learning problems: robust subspace recovery (RSR), trace ratio (TR) and sparse spectral clustering (SSC). On the left plot, the number of samples is fixed to $n=100$ and the dimension varies from $p=10$ to $p=90$. On the right plot, the dimension is fixed to $p=10$ and the number of samples varies from $n=20$ to $n=100$. We set $\qf \coloneqq (1, 2, 5)$.}
	\label{fig:time}
\end{figure}

We see that the running times are in the order of a second for all methods. They increase approximately linearly with the dimension $p$ (as expected from the complexity analysis of the Riemannian gradient and the polar retraction in Remark~\ref{rem:complexity}). They do not increase much with the number of samples $n$, except for the sparse spectral clustering problem which relies on the $n\times n$ adjacency matrix.

\vskip 0.2in
\bibliography{sample}

\end{document}